\DeclareMathOperator*{\arginf}{arg\,inf}
\newtheorem{theorem}{Theorem}
\newtheorem{lemma}[theorem]{Lemma}
\newtheorem{proposition}[theorem]{Proposition}
\newtheorem{corollary}{Corollary}[theorem]
\newtheorem{definition}[theorem]{Definition}
\newtheorem{remark}{Remark}
\title{Optimal Transport for $\epsilon$-Contaminated Credal Sets\\{\large To the memory of Sayan Mukherjee}}
\author{Michele Caprio}
\date{{\small University of Manchester, Department of Computer Science\\ Kilburn Building, Oxford Road, Manchester M13 9PL, United Kingdom}}
\begin{document}

\maketitle

\begin{abstract}
    We present generalized versions of Monge’s
and Kantorovich’s optimal transport problems with the probabilities being transported replaced by lower probabilities. We show that, when the lower probabilities are the lower envelopes of $\epsilon$-contaminated sets, then our version of Monge's, and a restricted version of our Kantorovich's problems, coincide with their respective classical versions. We also give sufficient conditions for the existence of our version of Kantorovich's optimal plan, and for the two problems to be equivalent. As a byproduct, we show that for $\epsilon$-contaminations the lower probability versions of Monge's and Kantorovich's optimal transport problems need not coincide. The applications of our results to Machine Learning and Artificial Intelligence are also discussed.
\end{abstract}

\section{Introduction}\label{intro}
The concept of stochasticity is pervasive in modern-day artificial intelligence (AI) and machine learning (ML), allowing to capture the lack of determinism that underpins virtually all interesting applications, ranging from the medical domain \citep{stutz,caprio2025conformalized} to trajectory prediction of ballistic missiles \citep{JI2022458}.

Two objects that are often of interest are a random quantity $\xi_1$, distributed according to a probability measure $P_1$, $\xi_1 \sim P_1$, and a transformation of $\xi_1$ via a function $T$, that we write $\xi_2=T(\xi_1)$, which in turn is distributed according to a probability measure $P_2$, $T(\xi_1)=\xi_2 \sim P_2$. Simple -- but important -- examples of these instances are height and body mass index (BMI) of a population, and a mother's income and her children's (future) income.

From a classical measure-theoretic argument \citep{rudin}, we can obtain $P_2$ as the pushforward measure of $P_1$ via $T$, $P_2(\xi_2 \in B) = P_1(T(\xi_1) \in B) = P_1(\xi_1 \in T^{-1}(B))$, where $B$ is an arbitrary subset of the space that $\xi_2$ take values on. Then, we can write $P_2 \equiv T_{\#} P_1 \coloneqq P_1 \circ T^{-1}$, so that $P_2$ is indeed the pushforward measure $T_{\#} P_1$ of $P_1$ via $T$.

An interesting question we may ask ourselves at this point is whether we can turn the problem around. Given $P_1$ and $P_2$, there are many functions $T$ that push $P_1$ to $P_2$. Is there an ``optimal'' one, that is, one 
that makes the transformation from $P_1$ to $P_2$ as efficient (i.e. less expensive) as possible? This question, which lays at the heart of the field of Optimal Transport (OT), is similar to one that Napoleonic engineers were asked by Napoleon himself. They were tasked to find the cheapest way of transporting iron ore from the mines to the factories \citep{villani}. 

To find such an optimal $T$, in the late 1700s Gaspard Monge suggested the following optimization problem,
\begin{align*}
    \arginf \left\lbrace{\int c(\xi_1,T(\xi_1)) P_1(\text{d}\xi_1) : T_\# P_1=P_2 }\right\rbrace.
\end{align*}
It seeks to find the function $T$ that makes transporting the probability mass encoded in $P_1$ to that encoded in its pushforward via $T$, $P_2=T_\#P_1$, as ``cheap'' as possible. The latter is gauged by considering a cost function $c$ that gives us
the cost of moving one unit of probability mass from $\xi_1$ to $\xi_2=T(\xi_1)$. In other words, $c$ gives us a measure of the efficiency of ``moving probability bits'' from $P_1$ to $P_2=T_\# P_1$. 

Alas, an optimal solution $T$ to this problem may not exist \citep[Section 1.2]{intro_ot}.
Fortunately, though, Leonid Kantorovich came up with an equivalent formulation of the problem that, under mild conditions, is guaranteed to be well-posed. His expression is the following
$$\arginf \left\lbrace{ \int c(\xi_1,\xi_2) \text{d}\alpha(\xi_1,\xi_2) : \alpha \in \Gamma(P_1,P_2) }\right\rbrace,$$
where $\Gamma(P_1,P_2)$ is the set of all joint probability measures whose marginals are $P_1$ and $P_2$. Instead of looking for the most efficient transportation map $T$ from $\xi_1$ to $\xi_2$, it seeks the ``cheapest'' {\em transportation plan} $\alpha$ between the distributions $P_1$ and $P_2$. The relationship between the optimal transportation plan $\alpha$ and the theory of copulas \citep{nelsen}\footnote{Recall that a {\em copula} is a multivariate cumulative distribution function, for which the marginal probability distribution of each variable is Uniform on the interval $[0, 1]$.} was studied e.g. in \citet{chi,liu2023distortedoptimaltransport}.


Another notable Leonid, Wasserstein, used the tools developed by Kantorovich and other optimal transport theory scholars to study a class of probability metrics that bears his name: for $p\geq 1$,
$$W_p(P_1,P_2)\coloneqq \left[ \inf_{\alpha\in \Gamma(P_1,P_2)} \int d(\xi_1,\xi_2)^p \text{d}\alpha(\xi_1,\xi_2) \right]^\frac{1}{p}$$
is the $p$-Wasserstein metric (on the probability space $P_1$ and $P_2$ are defined on), where cost function $c(\xi_1,\xi_2)$ is the $p$-th power of some metric $d$ on the state space where $\xi_1$ and $\xi_2$ are defined on, e.g. some norm $\|\xi_1-\xi_2\|$. In a sense, the Wasserstein metric allows to endow the probability space with a metric derived from the distance defined on the underlying state space. The concept of Wasserstein distance is ubiquitous in AI and ML, spanning fields such as data-driven control \citep{vivian} and uncertainty quantification \citep{yusuf-icml}.

\textbf{Contributions.} In this paper, we ask ourselves:

{\bf Question 1:} What do Monge's and Kantorovich's problems look like, when instead of transporting probability measures, we transport \textit{lower probabilities}?

Lower probabilities are the imprecise counterpart of classical probabilities that allow to describe the ambiguity faced by the scholar around the true data generating processes \citep{walley,intro_ip,decooman}. We give the first (to the best of our knowledge) definitions of Monge's and Kantorovich's problems for lower probabilities, and then we focus our attention on sets of probabilities $\mathcal{M}(\underline{P})$ (called \textit{cores} of a coherent lower probability, a special type of {\em credal sets}) that are completely characterized by their lower envelope $\underline{P}$ (that is a lower probability).\footnote{In general, credal sets are in one-to-one correspondence with lower prevision functionals \citep{decooman}.} This means that the whole set $\mathcal{M}(\underline{P})$ can be reconstructed by simply looking at lower probability $\underline{P}=\inf_{P \in \mathcal{M}(\underline{P})}P$. A pictorial representation of our endeavor is given in Figure \ref{fig1}.

It is necessary to mention that we are not the first to extend the study of OT beyond classical probability theory. \citet{ipmu,Nguyen-ot} do so for belief functions and random sets, and \citet{Rachev_Olkin_1999,gal2019kantorovich,TORRA2023679} do so for capacities and non-additive measures.

{\bf Question 2:} Is there a class of credal sets completely characterized by their lower probabilities (LPs), for which the LP versions of Monge's and Kantorovich's problems coincide with their classical counterparts?

We show that, for the class of $\epsilon$-contaminated credal sets (which we introduce in \eqref{contam-def}), the answer to Question 2 is positive.\footnote{What we mean is that solving the LP version of Monge's and Kantorovich's problems is equivalent to solving their classical versions. The solutions, then, will trivially coincide.}
This is an important result, as it promises to be fraught with fruitful consequences for many possible applications. We also give sufficient conditions (i) for the existence of the lower probability version of Kantorovich's optimal plan, and (ii) for the two problems formulations to coincide. A byproduct of the latter is that, in general, the lower probability versions of Monge's and Kantorovich's problems need not coincide. 

\begin{figure*}[h!]
\centering
\includegraphics[width=.78\textwidth]{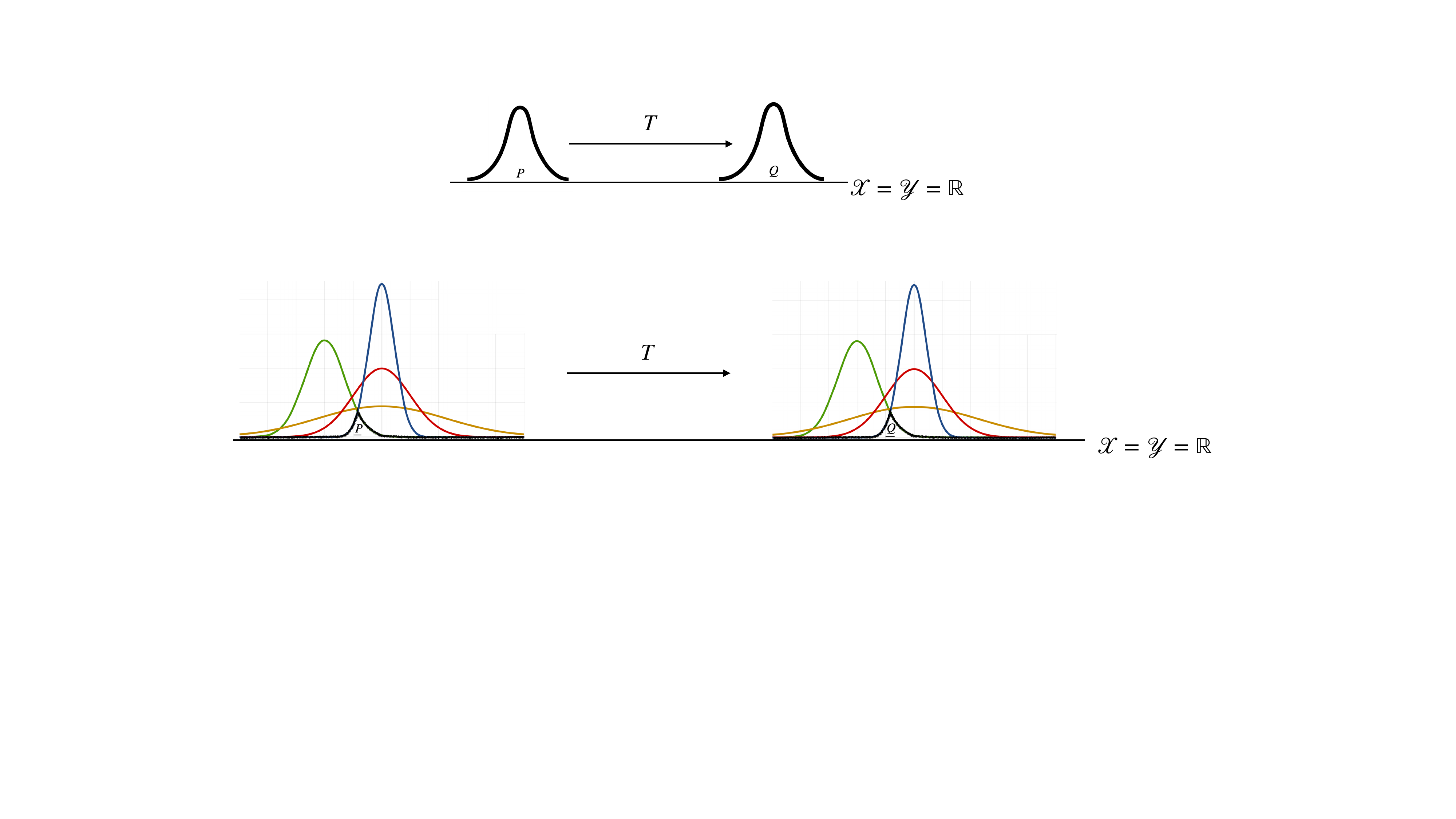}
\caption{Top: the optimal transport map $T$ between two bell-shaped distributions $P$ and $Q$ on $\mathbb{R}$. Bottom: the optimal transport map $T$ between lower probabilities $\underline{P}$ and $\underline{Q}$ (both depicted as black brushstrokes) that completely characterize credal sets $\mathcal{P}$ and $\mathcal{Q}$ of probabilities on $\mathbb{R}$. The colored distributions are elements of the respective credal sets.}\label{fig1}
\end{figure*}

\textbf{Motivation and Related Work.} Besides being interested in these results for their own mathematical beauty, our motivations to study them stem from the field of Imprecise Probabilistic Machine Learning (IPML) \citep{ibnn,dipk,ergo_me,eyke,credal-learning,denoeux,dest1,ibcl,zaffalon}. Credal Machine Learning (CML), a subfield of IPML, devotes itself to developing ML theory and methods working with credal sets. Our findings in this paper can be immediately applied to CML in at least three different contexts. 

First, they can be used to define new uncertainty measures enjoying the axiomatic desiderata in \citet{abellan3,jiro,volume} based on Hausdorff-type distances between sets of probabilities, which would extend the results in \citet{yusuf-icml} to credal sets. To be more specific, the Hausdorff distance based on the Wasserstein metric between two credal sets could be expressed as the Wasserstein distance between their lower probabilities. The latter is intimately related to the Kantorovich's OT problem that we define in this paper.

A second immediate application is robust Hypotheses Testing (HT) \citep{AUGUSTIN2002149,opt_test,gao,mortier,xing,siu-krik,huber-capacities}, where a test statistic based on the optimal transport cost between lower probabilities characterizing credal sets could be used to test whether the true data generating process -- that produces the data accruing to the phenomenon of interest -- belongs to either of the credal sets. In HT notation, $H_0: P^\text{true}$ belongs to $\mathcal{M}(\underline{P}_1)$, versus $H_1: P^\text{true}$ belongs to $\mathcal{M}(\underline{P}_2)$. Let us give a hand-wavy example.  \citet{opt_test} construct a procedure to test $H_0: P^\text{true} \in \mathcal{M}(\underline{P}_1)$ versus $H_1: P^\text{true} \not\in \mathcal{M}(\underline{P}_1)$. To carry out the test, they first split the available sample size into two parts, and use the first part to ``project'' $P^\text{true}$ onto $\mathcal{M}(\underline{P}_1)$; denote by $\hat{P}^\text{true}$ such a projection. Then, with the second part, they test $H_0^\prime: P^\text{true} \in B_\eta(\hat{P}^\text{true})$, where $B_\eta(\hat{P}^\text{true})$ is a ball (with respect to some metric, e.g. the Wasserstein one) of radius $\eta$ centered at $\hat{P}^\text{true}$, versus $H_1^\prime: P^\text{true} \not\in B_\eta(\hat{P}^\text{true})$, and they show that this procedure corresponds to the original test. In the spirit of \citet{ramdas-entropy}, we conjecture that -- if the goal of the researcher is to verify which credal set between $\mathcal{M}(\underline{P}_1)$ and $\mathcal{M}(\underline{P}_2)$ contains $P^\text{true}$ -- this procedure could be extended to one that includes the notion of Wasserstein metric between $\underline{P}_1$ and $\underline{P}_2$ to take into account how far $\mathcal{M}(\underline{P}_1)$ and $\mathcal{M}(\underline{P}_2)$ are from each other, thus making our contributions relevant. 

Finally, our findings can be seminal in starting a new field of inquiry that generalizes ergodic transport theory \citep{lopes,9147930,yifan} to the credal setting. This may also be related to the field of computer vision (and especially the theory of convolutional autoencoders \citep{YU202348}). To see how, we refer the interested reader to  \citet{impr-MSG}, where the ergodicity of (a version of) imprecise Markov processes is related to the behavior of the outputs of a convolutional autoencoder, as the inputs are perturbed.


\textbf{Structure of the Paper.} The paper is arranged as follows. Section \ref{back} gives the necessary background on credal sets. Our results pertaining the lower probability version of Monge's and Kantorovich's problems are presented in Section \ref{main}. Section \ref{concl} concludes our work.

\section{Background}\label{back}

In this section, we introduce the concepts of the core and of the Choquet integral. The reader who is familiar with these notions can skip to Section \ref{main}.

The main tool we work with in this paper is a particular type of a credal set (convex and weak$^\star$-closed set of probabilities \citep{levi2}),\footnote{The weak$^\star$ convergence is the setwise convergence on the fixed $\sigma$-algebra.} that is, what economists and operations researchers call the \textit{core} (of an exact capacity) \citep{cerreia,ergo_me,miranda}.

Given a capacity of interest -- in this paper, it will always be a lower probability $\underline{P}$, i.e. a set function on the $\sigma$-algebra of interest, mapping in $[0,1]$, which is the lower envelope of a weak$^\star$-compact set \citep[Section 2.1.(viii)]{cerreia} -- on a generic measurable space $(\mathcal{X},\mathcal{F})$, the core is defined as
\begin{align}\label{core_first}
    \mathcal{M}^\text{fa}(\underline{P}) \coloneqq \{P \in \Delta^\text{fa}_\mathcal{X}: P(A) \geq \underline{P}(A)\text{, } \forall A \in\mathcal{F}\},
\end{align}
where $\Delta^\text{fa}_\mathcal{X}$ denotes the set of finitely additive probabilities on $(\mathcal{X},\mathcal{F})$.

We focus on cores for two main reasons. First, in general we have that the convex hull of a finite set of finitely additive probabilities on $\mathcal{X}$ is a \textit{proper subset} of the core of the lower probability associated with that set, see e.g. \cite[Example 1]{amarante2} and \cite[Examples 6,7,8]{amarante}. That is, given $\{P_k\}_{k=1}^K \subset \Delta^\text{fa}_\mathcal{X}$, $K < \infty$, we have that $\text{CH}(\{P_k\}_{k=1}^K) \subset \mathcal{M}^\text{fa}(\underline{P})$, where CH denotes the convex hull operator, and $\underline{P}(A)=\inf_{P\in \text{CH}(\{P_k\}_{k=1}^K)}P(A)$, for all $A\in\mathcal{F}$. Hence, focusing on the core gives us more generality.

Second, the core is \textit{uniquely identified} by its lower probability \citep{gong}. To see this, notice that by knowing $\underline{P}$, we can reconstruct the set by simply considering all finitely additive probability measures on $\mathcal{X}$ that set-wise dominate $\underline{P}$. 

Before proceeding to the main results of this paper, we need to introduce the concepts of pushforward lower probability (PLP) and of Choquet integral.

\begin{definition}[Pushforward Lower Probability, PLP]\label{pf}
    Given two measurable spaces $(\mathcal{X},\mathcal{F})$ and $(\mathcal{Y},\mathcal{G})$, a measurable 
    mapping $T: \mathcal{X} \rightarrow \mathcal{Y}$, and a (coherent à la \citet[Section 2.5]{walley}) lower probability $\underline{P}=\inf_{P\in\mathcal{M}^\text{fa}(\underline{P})}P$, the pushforward of $\underline{P}$ is the (set) function $T_\#\underline{P}:\mathcal{G} \rightarrow [0,1]$ such that
    $$T_\#\underline{P}(B)=\underline{P}(T^{-1}(B)), \quad \forall B \in \mathcal{G}.$$
\end{definition}
\begin{lemma}[PLPs are well-defined]\label{well-def}
    The pushforward lower probability $T_\#\underline{P}$ in Definition \ref{pf} is a well-defined lower probability. 
\end{lemma}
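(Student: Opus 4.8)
The plan is to exhibit $T_\#\underline{P}$ as the lower envelope of an explicit nonempty family of finitely additive probabilities on $(\mathcal{Y},\mathcal{G})$, and then invoke the standard characterization (e.g. \citet[Section 2.5]{walley}, \citet[Section 2.1]{cerreia}) that any lower envelope of a nonempty set of finitely additive probabilities is a coherent lower probability — which is exactly what ``well-defined lower probability'' is meant to convey here.

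First I would check that the set function is sensible at all: since $T$ is $(\mathcal{F},\mathcal{G})$-measurable, $T^{-1}(B)\in\mathcal{F}$ for every $B\in\mathcal{G}$, so $T_\#\underline{P}(B)=\underline{P}(T^{-1}(B))$ is a well-defined element of $[0,1]$; moreover $T_\#\underline{P}(\emptyset)=\underline{P}(\emptyset)=0$ and $T_\#\underline{P}(\mathcal{Y})=\underline{P}(T^{-1}(\mathcal{Y}))=\underline{P}(\mathcal{X})=1$.

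Next I would introduce, for each $P\in\mathcal{M}^\text{fa}(\underline{P})$, its classical pushforward $T_\#P = P\circ T^{-1}$. A routine verification — using $T^{-1}(B_1\cup B_2)=T^{-1}(B_1)\cup T^{-1}(B_2)$ and the fact that preimages preserve disjointness — shows $T_\#P\in\Delta^\text{fa}_\mathcal{Y}$. Set $\mathcal{N}:=\{T_\#P : P\in\mathcal{M}^\text{fa}(\underline{P})\}$, which is nonempty because $\underline{P}$ is coherent and hence has nonempty core. The crux is the pointwise identity, for every $B\in\mathcal{G}$,
$$\inf_{Q\in\mathcal{N}}Q(B) = \inf_{P\in\mathcal{M}^\text{fa}(\underline{P})} P(T^{-1}(B)) = \underline{P}(T^{-1}(B)) = T_\#\underline{P}(B),$$
where the middle equality is precisely the fact — recorded in Definition \ref{pf} and guaranteed by the coherence of $\underline{P}$ — that $\underline{P}$ agrees with the lower envelope of its own core on every event of $\mathcal{F}$, in particular on $T^{-1}(B)\in\mathcal{F}$. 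Thus $T_\#\underline{P}=\inf_{Q\in\mathcal{N}}Q$ is the lower envelope of a nonempty collection of finitely additive probabilities on $(\mathcal{Y},\mathcal{G})$, hence a coherent lower probability; equivalently $\mathcal{N}\subseteq\mathcal{M}^\text{fa}(T_\#\underline{P})$ and $T_\#\underline{P}=\inf_{Q\in\mathcal{M}^\text{fa}(T_\#\underline{P})}Q$, so its core is nonempty and recovers it.

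I do not expect a serious obstacle here; the step that needs the most care is matching the paper's precise notion of ``lower probability''. If one insists that a lower probability be the lower envelope of a weak$^\star$-compact (hence closed, convex) set, I would replace $\mathcal{N}$ by the weak$^\star$-closed convex hull $\mathcal{M}^\text{fa}(T_\#\underline{P})$ and observe that $\inf_Q Q(B)$ is unchanged, since $Q\mapsto Q(B)$ is affine and weak$^\star$-continuous (weak$^\star$ convergence being setwise convergence on $\mathcal{G}$), so passing to the closed convex hull lowers no infimum. The only other point, the nonemptiness of $\mathcal{M}^\text{fa}(\underline{P})$, is immediate from coherence, or directly from the hypothesis that $\underline{P}$ is the lower envelope of a (necessarily nonempty) weak$^\star$-compact set.
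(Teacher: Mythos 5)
Your proof is correct, and it reaches the conclusion by a genuinely more direct route than the paper's. The paper's argument passes through the machinery of \citet{marinacci2}: it views $T_\#\underline{P}$ as a bounded game, shows its core of bounded charges $\mathcal{M}^\text{bc}(T_\#\underline{P})$ is nonempty (because it contains the pushforwards $T_\#P$ of the $P\in\mathcal{M}^\text{fa}(\underline{P})$) and hence weak$^\star$-compact, deduces via a net argument that $\mathcal{M}^\text{fa}(T_\#\underline{P})$ is weak$^\star$-closed and therefore compact, and finally invokes the characterization in \citet[Section 2.1.(viii)]{cerreia}, settling the envelope equality $T_\#\underline{P}(B)=\min_{Q\in\mathcal{M}^\text{fa}(T_\#\underline{P})}Q(B)$ by a rather terse contradiction. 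You instead exhibit $T_\#\underline{P}$ outright as the lower envelope of $\mathcal{N}=\{T_\#P: P\in\mathcal{M}^\text{fa}(\underline{P})\}$, where the inequality $\inf_{Q\in\mathcal{N}}Q(B)\le$ anything dominating is paired with the exact identity $\inf_P P(T^{-1}(B))=\underline{P}(T^{-1}(B))$ supplied by the hypothesis $\underline{P}=\inf_{P\in\mathcal{M}^\text{fa}(\underline{P})}P$ in Definition \ref{pf}; this is precisely the step the paper's contradiction argument glosses over, and your version makes it transparent. Both proofs share the same engine, namely that $P\ge\underline{P}$ setwise implies $T_\#P\ge T_\#\underline{P}$ setwise. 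What the paper's longer detour buys is the weak$^\star$-compactness of the full core $\mathcal{M}^\text{fa}(T_\#\underline{P})$ itself (the object the rest of the paper works with), not merely of some compact set whose envelope is $T_\#\underline{P}$; your closing remark, that passing from $\mathcal{N}$ to its weak$^\star$-closed convex hull (or indeed to any intermediate set inside $\mathcal{M}^\text{fa}(T_\#\underline{P})$) changes no infimum because $Q\mapsto Q(B)$ is affine and weak$^\star$-continuous, recovers the same conclusion, though note that the weak$^\star$-closed convex hull of $\mathcal{N}$ need not coincide with $\mathcal{M}^\text{fa}(T_\#\underline{P})$ as a set --- only their lower envelopes agree, which is all that is needed. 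A further small dividend of your route: $\mathcal{N}$ is the image of the weak$^\star$-compact set $\mathcal{M}^\text{fa}(\underline{P})$ under the map $P\mapsto P\circ T^{-1}$, which is weak$^\star$-to-weak$^\star$ continuous since weak$^\star$ convergence is setwise convergence; so $\mathcal{N}$ is already weak$^\star$-compact and the infimum is attained, matching the ``$\min$'' in the characterization without any closure argument.
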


Lemma \ref{well-def} -- whose proof relates PLPs to the mathematics of ambiguity \citep{marinacci2}, and is postponed to Appendix \ref{extra-proof} -- entails that $T_\#\underline{P}$ is a coherent lower probability on $\mathcal{G}$, hence a superadditive version of a pushforward probability measure \citep[Section 1.6.4]{walley}. We now introduce Choquet integrals \citep{choquet}, \citep[Section C.2]{decooman}.

\begin{definition}[Choquet Integral]
    Let $(\mathcal{Z},\mathcal{H})$ be a generic measurable space, and $\underline{P}$ be a generic lower probability on $\mathcal{Z}$. For each real-valued function $f$ on $\mathcal{Z}$, we associate the extended real number
    \begin{align}\label{choq_int}
    \begin{split}
        f \mapsto \int_\mathcal{Z} f \text{d}\underline{P} &\coloneqq \int_0^\infty \underline{P}^\star(\{f^+ \geq t\}) \text{d}t \\
        &- \int_0^\infty \left[ \underline{P}^\star(\mathcal{Z}) - \underline{P}^\star(\{f^- \geq t\}^c) \right] \text{d}t
    \end{split}
    \end{align}
    called the {\em Choquet integral} of $f$ with respect to $\underline{P}$, provided that the difference on the right-hand side is well defined. There, $f^+ \coloneqq 0 \vee f$, and $f^- \coloneqq -(0 \wedge f)$. Also, $\underline{P}^\star(A)\coloneqq \sup_{B\subseteq A}\underline{P}(B)$, for all $A \in \mathcal{H}$, is the inner lower probability \citep[Chapter 3.1]{walley},\footnote{We need to work with $\underline{P}^\star$ because $f$ may not be measurable. When it is, $\underline{P}^\star=\underline{P}$.} and the integrals are (improper) Riemannian integrals.\footnote{For a primer on Riemannian integrals, see \citet[Section C.1]{decooman}.}
\end{definition}
When \eqref{choq_int} is well defined, we say that the Choquet integral $\int_\mathcal{Z} f \text{d}\underline{P}$ of $f$ with respect to $\underline{P}$ {\em exists}. We now report \citet[Proposition C.3]{decooman}, which gives an alternative expression of $\int_\mathcal{Z} f \text{d}\underline{P}$, and a sufficient condition for its existence.

\begin{proposition}[Characterizing the Choquet Integral]\label{choq-characteriz}
    Using the same notation as Definition \ref{choq_int}, suppose the Choquet integral $\int_\mathcal{Z} f \text{\em \text{d}}\underline{P}$ of $f$ with respect to $\underline{P}$ exists. Then,
    \begin{align*}
        \int_\mathcal{Z} f \text{\em \text{d}}\underline{P} &= \int_0^\infty \underline{P}^\star(\{f \geq t\}) \text{\em \text{d}}t\\
        &+ \int_{-\infty}^0 \left[ \underline{P}^\star(\{f \geq t\}) - \underline{P}^\star(\mathcal{Z}) \right] \text{\em\text{d}}t.
    \end{align*}
    In addition, if $f$ is bounded or Borel measurable, then it is {\em Choquet integrable} with respect to $\underline{P}$, that is, its Choquet integral $\int_\mathcal{Z} f \text{\em \text{d}}\underline{P}$ exists.
\end{proposition}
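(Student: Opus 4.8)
The plan is to establish the two assertions in turn. For the alternative expression, I would first rewrite every level set occurring in \eqref{choq_int} as a level set of $f$ itself. Since $f^+ = 0 \vee f$, for every $t > 0$ one has $\{f^+ \geq t\} = \{f \geq t\}$, so the first term of \eqref{choq_int} is already $\int_0^\infty \underline{P}^\star(\{f \geq t\})\,\text{d}t$. Since $f^- = -(0 \wedge f)$, for every $t > 0$ one has $\{f^- \geq t\} = \{f \leq -t\}$, hence $\{f^- \geq t\}^c = \{f > -t\}$; substituting $s = -t$ in the second term of \eqref{choq_int} turns it into $\int_{-\infty}^0 \big[\underline{P}^\star(\mathcal{Z}) - \underline{P}^\star(\{f > s\})\big]\,\text{d}s$, and absorbing the overall minus sign gives $\int_{-\infty}^0 \big[\underline{P}^\star(\{f > s\}) - \underline{P}^\star(\mathcal{Z})\big]\,\text{d}s$. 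Adding the two terms yields the stated formula up to the replacement of $\{f > s\}$ by $\{f \geq s\}$; the assumption that the Choquet integral exists guarantees all these manipulations are carried out on well-defined quantities.

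The one step that is not pure bookkeeping is that replacement. Here I would invoke that $\underline{P}^\star$ is monotone --- immediate from $\underline{P}^\star(A) = \sup_{B \subseteq A}\underline{P}(B)$ --- so that $s \mapsto \underline{P}^\star(\{f \geq s\})$ is non-increasing on $\mathbb{R}$, hence continuous off an at most countable set $D$; from $\{f \geq s'\} \subseteq \{f > s\} \subseteq \{f \geq s\}$ for $s' > s$ and $s' \downarrow s$ one gets $\underline{P}^\star(\{f > s\}) = \underline{P}^\star(\{f \geq s\})$ whenever $s \notin D$. As $D$ is Lebesgue-null, the two improper Riemann integrals over $(-\infty,0)$ agree, which completes the derivation of the formula.

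For the sufficient condition I would argue case by case. If $f$ is bounded, say $|f| \leq M$, then $\{f^+ \geq t\} = \emptyset$ and $\{f^- \geq t\}^c = \mathcal{Z}$ for all $t > M$, so both improper integrals in \eqref{choq_int} collapse to proper Riemann integrals over $[0,M]$ of bounded non-increasing integrands; such integrands are Riemann integrable, both integrals are finite, and hence their difference --- the Choquet integral --- is well defined (indeed finite). If $f$ is Borel measurable, then every level set appearing in \eqref{choq_int} lies in $\mathcal{H}$, so $\underline{P}^\star$ coincides there with $\underline{P}$; the integrands $t \mapsto \underline{P}(\{f^+ \geq t\})$ and $t \mapsto \underline{P}(\mathcal{Z}) - \underline{P}(\{f^- \geq t\}^c)$ are non-negative and non-increasing on $[0,\infty)$, hence Riemann integrable on every compact subinterval, so the two improper integrals exist as elements of $[0,+\infty]$ and their difference is well defined (not being of the form $(+\infty)-(+\infty)$ in the setting of the cited statement), i.e. the Choquet integral exists.

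The main obstacle I expect is precisely the strict-versus-non-strict level-set identification in the first part: it is the single place where a measure-theoretic fact (the discontinuity set of a monotone function is countable) is needed rather than algebraic rewriting and a change of variables. For the second part, the only care required is reducing the improper Riemann integrals to bounded monotone integrands on compact intervals and checking that their difference does not degenerate --- routine once the level-set bookkeeping above is in place.
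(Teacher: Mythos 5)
The paper does not actually prove this proposition --- it is reported verbatim from \citet[Proposition C.3]{decooman} with no argument --- so any self-contained derivation is by necessity a different route. Your derivation of the alternative expression is correct and is essentially the standard one: the identifications $\{f^+\geq t\}=\{f\geq t\}$ and $\{f^-\geq t\}^c=\{f>-t\}$ for $t>0$, the substitution $s=-t$, and the replacement of $\{f>s\}$ by $\{f\geq s\}$ via countability of the discontinuity set of the non-increasing map $s\mapsto\underline{P}^\star(\{f\geq s\})$ all go through; since both integrands are monotone, hence Riemann integrable on compact intervals, agreement off a Lebesgue-null set does yield equal (improper) integrals. The bounded case of the second assertion is also fine.

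The gap is in the Borel-measurable case. You assert that the difference of the two improper integrals is ``not of the form $(+\infty)-(+\infty)$ in the setting of the cited statement,'' but nothing in your argument, and nothing in the hypothesis, rules this out. Take $\mathcal{Z}=\mathbb{R}$ with the Borel $\sigma$-algebra, let $\underline{P}$ be the (degenerate) lower envelope of the singleton containing a standard Cauchy distribution, and let $f(x)=x$: then $f$ is Borel measurable, both improper integrals in \eqref{choq_int} equal $+\infty$, and the difference is undefined, so the Choquet integral does not exist under the definition adopted in the paper. What measurability actually buys is that $\underline{P}^\star$ may be replaced by $\underline{P}$ on the level sets; the existence of each integral separately as an element of $[0,+\infty]$ already follows from monotonicity of the integrands without any measurability. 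To close the case you need $f$ bounded on at least one side (or some integrability condition), or you need to read ``exists'' as ``each of the two integrals exists as an extended real,'' which is presumably the convention implicit in the source (where gambles are bounded). The gap is harmless for the paper's downstream use, since the proposition is only ever applied to non-negative cost functions, for which $f^-\equiv 0$ and the second integral vanishes; but as a proof of the statement as printed, the measurable case is not established.
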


\begin{corollary}[A Simplification of the Choquet Integral] \label{simpl}
    Using the same notation as Definition \ref{choq_int}, if $f$ is positive and measurable, then $\int_\mathcal{Z} f \text{\em \text{d}}\underline{P} = \int_0^\infty \underline{P}(\{f \geq t\}) \text{\em \text{d}}t$. If $f$ is also bounded, then the weak inequality can be substituted by a strict one. 
\end{corollary}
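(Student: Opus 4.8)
The plan is to peel the Choquet integral \eqref{choq_int} down to a single super-level-set integral and then do two clean-up steps: drop the inner operation $\underline{P}^\star$ using measurability of $f$, and pass from $\{f\ge t\}$ to $\{f>t\}$ using monotonicity of $\underline{P}$. First I would use positivity of $f$: then $f^+=f$ and $f^-\equiv 0$, so for every $t>0$ we have $\{f^-\ge t\}=\varnothing$, hence $\{f^-\ge t\}^c=\mathcal{Z}$ and the integrand of the second term of \eqref{choq_int} is $\underline{P}^\star(\mathcal{Z})-\underline{P}^\star(\mathcal{Z})=0$. (Equivalently, apply Proposition~\ref{choq-characteriz}: for positive $f$ and $t<0$ one has $\{f\ge t\}=\mathcal{Z}$, so the $\int_{-\infty}^0$ term there vanishes for the same reason.) This leaves
\[
\int_\mathcal{Z} f \,\text{d}\underline{P} = \int_0^\infty \underline{P}^\star(\{f \ge t\})\,\text{d}t ,
\]
an improper Riemann integral well defined in $[0,\infty]$ because $t\mapsto \underline{P}^\star(\{f\ge t\})$ is non-increasing.

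Next, since $f$ is measurable, $\{f\ge t\}\in\mathcal{H}$ for every $t$, and for a coherent (hence monotone) lower probability $\underline{P}$ one has $\underline{P}^\star(A)=\underline{P}(A)$ whenever $A\in\mathcal{H}$: the set $A$ is itself admissible in the supremum defining $\underline{P}^\star(A)$, so $\underline{P}^\star(A)\ge\underline{P}(A)$, while monotonicity of $\underline{P}$ forces $\underline{P}(B)\le\underline{P}(A)$ for every admissible $B\subseteq A$, giving the reverse inequality. Substituting $\underline{P}$ for $\underline{P}^\star$ yields the first assertion, $\int_\mathcal{Z} f\,\text{d}\underline{P}=\int_0^\infty \underline{P}(\{f\ge t\})\,\text{d}t$. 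For the bounded case, write $0\le f\le M$, so $\{f\ge t\}=\varnothing$ for $t>M$ and both $\int_0^\infty \underline{P}(\{f\ge t\})\,\text{d}t$ and $\int_0^\infty \underline{P}(\{f>t\})\,\text{d}t$ are finite Riemann integrals over $[0,M]$. It then suffices to show $\underline{P}(\{f\ge t\})=\underline{P}(\{f>t\})$ for Lebesgue-almost every $t$. Both $g_1(t)\coloneqq\underline{P}(\{f\ge t\})$ and $g_2(t)\coloneqq\underline{P}(\{f>t\})$ are non-increasing in $t$, hence continuous off an at most countable set; at any continuity point $t_0$ of $g_1$, the inclusions $\{f>t_0\}\subseteq\{f\ge t_0\}$ and $\{f\ge s\}\subseteq\{f>t_0\}$ (for $s>t_0$) together with monotonicity of $\underline{P}$ give $g_1(t_0)=\lim_{s\downarrow t_0} g_1(s)\le g_2(t_0)\le g_1(t_0)$, so $g_1=g_2$ at $t_0$. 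Thus $g_1=g_2$ a.e., the two integrals coincide, and combining with the first assertion gives $\int_\mathcal{Z} f\,\text{d}\underline{P}=\int_0^\infty \underline{P}(\{f>t\})\,\text{d}t$.

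The step I expect to be the crux is the last one, and the subtle point worth flagging is that it must \emph{not} rely on any continuity-from-below property of $\underline{P}$ — coherent lower probabilities need not be continuous along increasing sequences, so one cannot simply write $\underline{P}(\{f>t\})=\lim_n\underline{P}(\{f\ge t+1/n\})$; the sandwich above circumvents this using only set inclusions among level sets and the monotonicity of $g_1$. Boundedness of $f$ plays only an auxiliary role here, ensuring the integrals are genuine finite integrals so that "equality a.e. implies equal integrals'' is entirely elementary; everything else (killing the negative part, removing $\underline{P}^\star$) is bookkeeping against the definitions and Proposition~\ref{choq-characteriz}.
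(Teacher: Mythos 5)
Your proof is correct, and it is substantially more self-contained than the paper's, which simply defers both claims to citations (Proposition \ref{choq-characteriz} together with Equation (11) of Marinacci--Montrucchio for the equality, and their Proposition 17 for the bounded case). For the first claim your route is essentially the one those references encode: positivity of $f$ kills the $f^-$ term of \eqref{choq_int}, and measurability of the super-level sets lets you replace the inner lower probability $\underline{P}^\star$ by $\underline{P}$ via monotonicity -- exactly the observation the paper relegates to a footnote. Where you genuinely add something is the second claim: you correctly decode the cryptic phrase ``the weak inequality can be substituted by a strict one'' as the replacement of $\{f\ge t\}$ by $\{f>t\}$ in the integrand, and you supply an actual argument for it, namely that the two non-increasing survival functions $t\mapsto\underline{P}(\{f\ge t\})$ and $t\mapsto\underline{P}(\{f> t\})$ agree at every continuity point of the former, hence almost everywhere, hence have equal (finite, by boundedness) Riemann integrals over $[0,M]$. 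Your remark that one must \emph{not} invoke continuity from below of $\underline{P}$ -- a property coherent lower probabilities need not satisfy -- and that the sandwich via set inclusions and right limits circumvents it, is precisely the point that makes the cited Proposition 17 non-trivial; the paper gains brevity by outsourcing it, while your version gains transparency at no cost in rigor.
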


\begin{proof}
    The first part of the statement comes from Proposition \ref{choq-characteriz} and \citet[Equation (11)]{marinacci2}. The second part is a consequence of \citet[Proposition 17]{marinacci2}. A similar result to Corollary \ref{simpl} was proven by \citet{Grabisch2016}.
\end{proof}

\section{Main Results}\label{main}

In this section, we answer the two questions that we put forth in the Introduction. First, we give the general definitions of Monge's and Kantorovich's problems for transporting lower probabilities $\underline{P}$ and $\underline{Q}$. Then, we notice how for a special type of cores $\mathcal{M}(\underline{P})$ and $\mathcal{M}(\underline{Q})$, namely those associated with contaminated (countably additive) probabilities $P$ and $Q$, such problems are equivalent to the classical ones, where the transport happens between $P$ and $Q$. This is because $\mathcal{M}(\underline{P})$ and $\mathcal{M}(\underline{Q})$ are completely characterized by the lower probabilities $\underline{P}=(1-\epsilon)P$ and $\underline{Q}=(1-\epsilon)Q$. We show formally how the intuition that the $(1-\epsilon)$ scaling factor does not play a role when solving Monge's and Kantorovich's problems is correct. As a result, the lower probabilities and the classical versions of such problems coincide.

We begin by remarking two notational choices and an assumption that we make in the rest of the paper. We will put $\mathcal{M}^\text{fa}(\underline{P}) \equiv \mathcal{M}(\underline{P})$, and we will call $\Delta^\text{ca}_\mathcal{X}$ the space of countably additive probabilities on $\mathcal{X}$. We will also assume that the cost function $c$ is (Borel) measurable, so that (i) inner lower probability $\underline{P}^\star$ and ``classical'' lower probability $\underline{P}$ coincide, and (ii) the Choquet integrals that we consider exist. 

The special cores that we consider are the so-called {\em $\epsilon$-contaminated credal sets}. That is, given a countably additive probability measure $P$ on $\mathcal{X}$, $P \in \Delta^\text{ca}_\mathcal{X}$, we consider the set
\begin{align}\label{contam-def}
\begin{split}
    \mathcal{P}_\epsilon=\{\Pi \in \Delta^\text{fa}_\mathcal{X} : \text{ } &\Pi(A)=(1-\epsilon) P(A) + \epsilon R(A),\\
    &\forall R \in \Delta^\text{fa}_\mathcal{X}, \forall A \in \mathcal{F}\},
\end{split}
\end{align}
where $\epsilon$ is a parameter in $[0,1]$. 

\begin{lemma}[Properties of $\epsilon$-Contaminated Credal Sets]\label{lemma-prop}
    Let $\mathcal{P}_\epsilon$ be an $\epsilon$-contaminated credal set as in \eqref{contam-def}. Then, $\underline{P}^\prime(A)$ is given by
    \begin{align}\label{first_lp}
\inf_{\Pi\in\mathcal{P}_\epsilon} \Pi(A) =\begin{cases}
        (1-\epsilon) P(A), & \text{for all } A\in\mathcal{F}\setminus \{\mathcal{X}\}\\
        1, & \text{for } A=\mathcal{X}
    \end{cases}
    \end{align}
    and 
    \begin{align}\label{eq_imp}
        \mathcal{P}_\epsilon=\mathcal{M}(\underline{P}^\prime)=\{\Pi \in \Delta^\text{{\em fa}}_\mathcal{X} : \Pi(A) \geq \underline{P}^\prime(A) \text{, } \forall A\in\mathcal{F}\}.
    \end{align}
\end{lemma}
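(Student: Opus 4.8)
The plan is to prove the two displayed equations separately, starting with the computation of the lower envelope \eqref{first_lp} and then deducing the core identity \eqref{eq_imp}.

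\textbf{Step 1: computing $\underline{P}^\prime$.} For a fixed $A \in \mathcal{F}$ with $A \neq \mathcal{X}$, I would take the infimum of $\Pi(A) = (1-\epsilon)P(A) + \epsilon R(A)$ over $R \in \Delta^\text{fa}_\mathcal{X}$. Since $(1-\epsilon)P(A)$ is a constant not depending on $R$, the infimum is achieved by driving $\epsilon R(A)$ to its smallest value. If $A \neq \emptyset$ and $A \neq \mathcal{X}$, then $A$ and $A^c$ are both nonempty, so one can pick a finitely additive $R$ (e.g.\ a $0$--$1$ valued charge concentrated ``on $A^c$'', i.e.\ a point mass or an ultrafilter charge living in $A^c$) with $R(A) = 0$; hence $\inf_R \epsilon R(A) = 0$ and $\underline{P}^\prime(A) = (1-\epsilon)P(A)$. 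The case $A = \emptyset$ is trivial since every $\Pi(\emptyset) = 0 = (1-\epsilon)P(\emptyset)$. For $A = \mathcal{X}$, every $\Pi \in \mathcal{P}_\epsilon$ is a (finitely additive) probability, so $\Pi(\mathcal{X}) = 1$, giving $\underline{P}^\prime(\mathcal{X}) = 1 = (1-\epsilon) \cdot 1 + \epsilon \cdot 1$. This establishes \eqref{first_lp}.

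\textbf{Step 2: the inclusion $\mathcal{P}_\epsilon \subseteq \mathcal{M}(\underline{P}^\prime)$.} This is immediate: by definition $\underline{P}^\prime(A) = \inf_{\Pi \in \mathcal{P}_\epsilon}\Pi(A) \leq \Pi(A)$ for every $\Pi \in \mathcal{P}_\epsilon$ and every $A \in \mathcal{F}$, which is exactly the defining condition for membership in $\mathcal{M}(\underline{P}^\prime)$.

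\textbf{Step 3: the reverse inclusion $\mathcal{M}(\underline{P}^\prime) \subseteq \mathcal{P}_\epsilon$.} Let $\Pi \in \Delta^\text{fa}_\mathcal{X}$ satisfy $\Pi(A) \geq \underline{P}^\prime(A)$ for all $A \in \mathcal{F}$; I want to exhibit $R \in \Delta^\text{fa}_\mathcal{X}$ with $\Pi = (1-\epsilon)P + \epsilon R$. The natural candidate is $R \coloneqq \tfrac{1}{\epsilon}\bigl(\Pi - (1-\epsilon)P\bigr)$ (assume $\epsilon > 0$; the case $\epsilon = 0$ forces $\Pi = P$ and is separate). One checks $R$ is a finitely additive set function with $R(\mathcal{X}) = \tfrac{1}{\epsilon}(1 - (1-\epsilon)) = 1$ and $R(\emptyset) = 0$; finite additivity is inherited from that of $\Pi$ and $P$. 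The only nontrivial point is nonnegativity: for $A \in \mathcal{F}\setminus\{\mathcal{X}\}$, the hypothesis gives $\Pi(A) \geq (1-\epsilon)P(A)$, hence $R(A) = \tfrac{1}{\epsilon}(\Pi(A) - (1-\epsilon)P(A)) \geq 0$, and for $A = \mathcal{X}$ we already have $R(\mathcal{X}) = 1 \geq 0$. Thus $R \in \Delta^\text{fa}_\mathcal{X}$ and $\Pi \in \mathcal{P}_\epsilon$, completing the proof.

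\textbf{Main obstacle.} The only genuinely delicate point is in Step 1, justifying that for $\emptyset \neq A \neq \mathcal{X}$ there really is a finitely additive probability $R$ with $R(A) = 0$ (so that the infimum is attained, or at least approached), and correctly handling the boundary cases $A = \emptyset$ and $A = \mathcal{X}$ where the formula has its special value $1$; one must be careful that $\mathcal{X}$ is the unique atom-free exception. Everything else is routine linearity and the definition of the core. I would also note explicitly that $\underline{P}^\prime$ as displayed is \emph{not} obtained by restricting $(1-\epsilon)P$ literally at $\mathcal{X}$ — the value jumps to $1$ — which is the subtlety the lemma is flagging.
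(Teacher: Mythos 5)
Your proof is correct. Note that the paper does not actually prove this lemma: it simply cites \citet[Example 3]{wasserman} and \citet[Section 2.9.2]{walley}, so your self-contained verification is a genuine addition rather than a rederivation of the paper's argument. Your two key steps are exactly the right ones: for \eqref{first_lp}, the Dirac mass $\delta_x$ at any $x \in A^c$ (which exists precisely because $A \neq \mathcal{X}$) gives $R(A)=0$ and shows the infimum $(1-\epsilon)P(A)$ is attained, while $\Pi(\mathcal{X})=1$ forces the exceptional value at $\mathcal{X}$; for \eqref{eq_imp}, the decomposition $R = \tfrac{1}{\epsilon}(\Pi - (1-\epsilon)P)$ is the standard witness. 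Two small points you should make explicit. First, in Step 3 you verify $R(A) \geq 0$ but not $R(A) \leq 1$; this is needed for $R \in \Delta^\text{fa}_\mathcal{X}$ and follows in one line from finite additivity, since $R(A) = R(\mathcal{X}) - R(A^c) = 1 - R(A^c) \leq 1$ once nonnegativity of $R(A^c)$ is known (handling $A^c = \mathcal{X}$, i.e.\ $A = \emptyset$, separately). Second, the boundary case $\epsilon = 0$ deserves its one-line completion: if $\Pi(A) \geq P(A)$ for all $A$, then applying this to $A^c$ gives $\Pi(A) \leq P(A)$ as well, so $\Pi = P$ and $\mathcal{M}(\underline{P}^\prime) = \{P\} = \mathcal{P}_0$. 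With these additions the argument is complete and, arguably, more informative than the paper's citation.
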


\begin{proof}
    Both these properties were proven in \citet[Example 3]{wasserman} and in \citet[Section 2.9.2]{walley}.
\end{proof}

A remark is in order. The elements of $\mathcal{P}_\epsilon$ must be finitely additive probabilities, and not merely countably additive, because if that were not the case, then $\mathcal{P}_\epsilon$ would not be weak$^\star$-compact, and so it would not be a well-defined core. Let us give an example, borrowed from \citet{walley}. Consider an $\epsilon$-contamination model on the Naturals $\mathbb{N}$, with $\epsilon=1$ (this is the vacuous lower probability that assigns lower probability $0$ to every natural number). The sequence $(\delta_n)_{n\in\mathbb{N}}$ of Dirac measures $\delta_n$ assigning mass $1$ to ${n} \in \mathbb{N}$ is a sequence of countably additive probability measures that belongs to $\mathcal{P}_\epsilon$. But this sequence has no weak$^\star$ converging subsequence to a countably additive probability measure. If it did, it would have to assign probability $0$ to all of the Naturals $n\in\mathbb{N}$. Hence, $\mathcal{P}_\epsilon$ cannot be weak$^\star$-compact in this case.

This is a technicality which does not affect the interpretation of our results, for two main reasons. First, all countably additive probabilities are also finitely additive, that is, $\Delta^\text{ca}_\mathcal{X} \subset \Delta^\text{fa}_\mathcal{X}$. Second, we consider contaminations of a countably additive probability $P \in \Delta^\text{ca}_\mathcal{X}$.\footnote{Of course, in general we may have $P \in \Delta^\text{fa}_\mathcal{X}$.} This is because we want to relate the lower probability versions of Monge's and Kantorovich's OT problems (that we introduce later in this Section) with the classical ones, that are formulated for countably additive probabilities.

In the remainder of the paper, we will work with the (incoherent, according to \citet[Section 2.5]{walley}) lower probability $\underline{P}$ such that
\begin{align}\label{second_lp}
    \underline{P}(A)=(1-\epsilon)P(A), \quad \text{for all } A\in\mathcal{F},
\end{align}
in place of $\underline{P}^\prime$. This is an unnormalized countably additive measure, which is Radon measure if the underlying space is separable. The reason we work with $\underline{P}$ in \eqref{second_lp} is twofold: calculations are easier to carry out, and also the following lemma holds. The interested reader can find a further discussion on this choice in Appendix \ref{app-1}.

\begin{lemma}[A More Convenient Core]\label{equality-cores}
    Pick any countably additive probability measure $P$ on $\mathcal{X}$, any $\epsilon \in [0,1]$, and consider the two lower probabilities $\underline{P}^\prime$ and $\underline{P}$ in \eqref{first_lp} and \eqref{second_lp}, respectively. Let $\mathcal{M}(\underline{P})=\{\Pi \in \Delta^\text{\em fa}_\mathcal{X} : \Pi(A) \geq \underline{P}(A) \text{, } \forall A \in\mathcal{F}\}$. Then, 
    $\mathcal{M}(\underline{P}^\prime)=\mathcal{M}(\underline{P})$. 
\end{lemma}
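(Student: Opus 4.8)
The plan is to show that the two cores are equal by proving mutual inclusion, and the key observation is that $\underline{P}^\prime$ and $\underline{P}$ differ only on the single event $A=\mathcal{X}$, where the constraint is automatically satisfied by every probability.

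First I would unpack the definitions. We have
\[
\underline{P}^\prime(A) = \begin{cases}(1-\epsilon)P(A), & A \in \mathcal{F}\setminus\{\mathcal{X}\}\\ 1, & A=\mathcal{X}\end{cases}
\qquad\text{and}\qquad
\underline{P}(A)=(1-\epsilon)P(A)\ \text{ for all }A\in\mathcal{F}.
\]
Thus $\underline{P}^\prime(A)=\underline{P}(A)$ for every $A\neq\mathcal{X}$, and the only discrepancy is $\underline{P}^\prime(\mathcal{X})=1\geq (1-\epsilon)=\underline{P}(\mathcal{X})$ (with equality iff $\epsilon=0$).

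For the inclusion $\mathcal{M}(\underline{P}^\prime)\subseteq\mathcal{M}(\underline{P})$: if $\Pi$ set-wise dominates $\underline{P}^\prime$, then in particular $\Pi(A)\geq\underline{P}^\prime(A)=\underline{P}(A)$ for all $A\neq\mathcal{X}$, and trivially $\Pi(\mathcal{X})=1\geq(1-\epsilon)=\underline{P}(\mathcal{X})$; hence $\Pi\in\mathcal{M}(\underline{P})$. For the reverse inclusion $\mathcal{M}(\underline{P})\subseteq\mathcal{M}(\underline{P}^\prime)$: if $\Pi$ is a finitely additive probability with $\Pi(A)\geq\underline{P}(A)$ for all $A$, then for $A\neq\mathcal{X}$ we have $\Pi(A)\geq\underline{P}(A)=\underline{P}^\prime(A)$, and for $A=\mathcal{X}$ we have $\Pi(\mathcal{X})=1=\underline{P}^\prime(\mathcal{X})$ since $\Pi$ is a probability measure; hence $\Pi\in\mathcal{M}(\underline{P}^\prime)$. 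Combining the two inclusions gives $\mathcal{M}(\underline{P}^\prime)=\mathcal{M}(\underline{P})$.

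There is essentially no obstacle here: the result hinges entirely on the fact that the normalization constraint $\Pi(\mathcal{X})=1$ is built into the definition of a (finitely additive) probability, so redefining the lower probability on the top element $\mathcal{X}$ cannot change which probabilities dominate it. I could also note in passing that, by Lemma \ref{lemma-prop}, $\mathcal{M}(\underline{P}^\prime)=\mathcal{P}_\epsilon$, so the chain of equalities $\mathcal{P}_\epsilon=\mathcal{M}(\underline{P}^\prime)=\mathcal{M}(\underline{P})$ justifies working with the simpler unnormalized $\underline{P}$ in the sequel.
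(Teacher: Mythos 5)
Your proof is correct and follows essentially the same route as the paper's: both rest on the single observation that $\underline{P}^\prime$ and $\underline{P}$ agree on every event except $\mathcal{X}$, where the constraint is vacuous because any finitely additive probability satisfies $\Pi(\mathcal{X})=1\geq 1-\epsilon$. Your explicit two-inclusion write-up is just a slightly more spelled-out version of the paper's one-line equivalence argument.
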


\begin{proof}
    We begin by noting that $\underline{P}^\prime$ is a well-defined lower probability by \citet[Section 2.1.(viii)]{cerreia}. Now, pick any $\Pi \in \Delta^\text{fa}_\mathcal{X}$. We have that $\Pi(A) \geq \underline{P}(A)$ if and only if $\Pi(A) \geq \underline{P}^\prime(A)$, for all $A \in \mathcal{F}\setminus\{\mathcal{X}\}$. In addition, $\Pi(\mathcal{X})=\underline{P}^\prime(\mathcal{X})=1 > 1-\epsilon = \underline{P}(\mathcal{X})$. In turn, this shows that $\mathcal{M}(\underline{P}^\prime)=\mathcal{M}(\underline{P})$, concluding the proof.
\end{proof}

The intuition behind Lemma \ref{equality-cores} is that $\underline{P}^\prime$ and $\underline{P}$ only disagree (by $\epsilon$ much) on the value to assign to $\mathcal{X}$. But any finitely additive probability measure $\Pi$ assigns probability $1$ to the whole state space $\mathcal{X}$. So, to determine whether $\Pi$ belongs to $\mathcal{M}(\underline{P}^\prime)=\mathcal{M}(\underline{P})$, it is enough to check if $\Pi$ set-wise dominates $\underline{P}^\prime$ and $\underline{P}$ on the events in $\mathcal{F}\setminus \{\mathcal{X}\}$. An immediate consequence of this argument is that we can write $\mathcal{M}(\underline{P}^\prime)=\mathcal{M}(\underline{P})=\{\Pi \in \Delta^\text{fa}_\mathcal{X} : \Pi(A) \geq \underline{P}(A) = \underline{P}^\prime(A) \text{, } \forall A \in \mathcal{F}\setminus \{\mathcal{X}\}\}.$


Now, let $\mathcal{Q}_\epsilon \subset \Delta^\text{fa}_\mathcal{Y}$ be a credal set defined similarly to $\mathcal{P}_\epsilon$, and consider its ``associated'' lower probability $\underline{Q}=(1-\epsilon)Q$, $Q\in \Delta^\text{ca}_\mathcal{Y}$. We are interested in the Optimal Transport (OT) map between $\mathcal{M}(\underline{P})$ and $\mathcal{M}(\underline{Q})$. Because these sets are completely characterized by $\underline{P}$ and $\underline{Q}$, respectively (as we have seen in Section \ref{back}), we focus our attention on such lower probabilities. 

\subsection{Lower Probability Monge's (LPM) Problem}\label{monge-subsec}

We begin our endeavor of finding the OT map by writing a version of Monge's optimal transport problem involving $\underline{P}$ and $\underline{Q}$. It is the following.

\begin{definition}[Lower Probability Monge's OT Problem, LPM] \label{monge}
    Let $c: \mathcal{X}\times \mathcal{Y} \rightarrow \mathbb{R}_+$ be a Borel measurable (cost) function. Given lower probabilities $\underline{P}$ and $\underline{Q}$ on $\mathcal{X}$ and $\mathcal{Y}$, respectively, we want to find the (measurable) 
    optimal transport map $T:\mathcal{X}\rightarrow\mathcal{Y}$ that solves the following optimization problem
    \begin{equation}\label{mon-eq}
        \arginf \left\lbrace{\int_\mathcal{X} c(x,T(x)) \underline{P}(\text{d}x) : T_\#\underline{P}=\underline{Q}}\right\rbrace.
    \end{equation}
\end{definition}

We assume $c$ to be Borel measurable and $T$ to be measurable to ensure that $c(x,T(x))$ is Choquet integrable. Let us also notice that we work with the Choquet integral $\int_\mathcal{X} c(x,T(x)) \underline{P}(\text{d}x)$ because, in the case of $\epsilon$-contaminations, it corresponds to the lower expectation $\inf_{P\in\mathcal{M}(\underline{P})} \int_\mathcal{X} c(x,T(x)) P(\text{d}x)$; this is because $\underline{P}$ is $2$-monotone. In general, though, we have that $\inf_{P\in\mathcal{M}(\underline{P})} \int_\mathcal{X} c(x,T(x)) P(\text{d}x) \leq \int_\mathcal{X} c(x,T(x)) \underline{P}(\text{d}x)$; the Choquet integral (with respect to a lower probability) is an upper bound for the worst-case (i.e. lower) expectation. In the future, we will study how the solution to LPM changes if we consider more general cores (i.e. not necessarily $\epsilon$-contaminations), and if we work with lower expectations in place of Choquet integrals. 

Notice that the OT map $T$ need not exist,\footnote{In the classical (precise) case, an optimal transport map $T$ does not exist when $P$ is a Dirac measure, but $Q$ is not.} so we will need to verify its existence in every application of interest. We now show that, for $\epsilon$-contaminated credal sets $\mathcal{P}_\epsilon$ and $\mathcal{Q}_\epsilon$, LPM is equivalent to the classical Monge's problem of finding the OT map between the contaminated probabilities $P$ and $Q$. Throughout the rest of the paper, we perpetrate an abuse of terminology and refer to $\underline{P}$ and $\underline{Q}$ as ``lower envelopes'', even though they are not normalized at $1$.

\begin{theorem}[LPM Coincides with Classical Monge for $\epsilon$-Contaminated Credal Sets] \label{thm_equiv_monge}
    Suppose $\mathcal{X}$ and $\mathcal{Y}$ are separable, so that the elements of $\Delta^\text{\em ca}_\mathcal{X}$ and $\Delta^\text{\em ca}_\mathcal{Y}$ are Radon measures. If $\underline{P}$ and $\underline{Q}$ are the lower envelopes of the $\epsilon$-contaminations $\mathcal{P}_\epsilon \subseteq \Delta^\text{\em fa}_\mathcal{X}$ and $\mathcal{Q}_\epsilon \subseteq \Delta^\text{\em fa}_\mathcal{Y}$ of $P\in \Delta^\text{\em ca}_\mathcal{X}$ and $Q \in \Delta^\text{\em ca}_\mathcal{Y}$, respectively, then the LPM of Definition \ref{monge} is equivalent to the classical Monge’s OT Problem involving $P$ and $Q$.
\end{theorem}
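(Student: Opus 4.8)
The plan is to show that the LPM problem in \eqref{mon-eq}, when specialized to $\underline{P}=(1-\epsilon)P$ and $\underline{Q}=(1-\epsilon)Q$, has exactly the same feasible set and the same objective (up to a harmless positive multiplicative constant) as the classical Monge problem $\arginf\{\int_\mathcal{X} c(x,T(x))\,P(\mathrm{d}x):T_\#P=Q\}$; since $\mathrm{arg\,inf}$ is unchanged by scaling the objective by a positive constant, the two problems then have identical solution sets.

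First I would analyze the constraint $T_\#\underline{P}=\underline{Q}$. By Definition \ref{pf}, $T_\#\underline{P}(B)=\underline{P}(T^{-1}(B))=(1-\epsilon)P(T^{-1}(B))$ for all $B\in\mathcal{G}$, while $\underline{Q}(B)=(1-\epsilon)Q(B)$. If $\epsilon<1$, dividing through by $(1-\epsilon)>0$ shows $T_\#\underline{P}=\underline{Q}$ holds if and only if $P(T^{-1}(B))=Q(B)$ for all $B$, i.e. $T_\#P=Q$ in the classical sense. So the feasible set of measurable maps is literally the same set. The boundary case $\epsilon=1$ needs a separate word: then both $\underline{P}$ and $\underline{Q}$ are identically $0$ on all proper events (the vacuous lower probability), so every measurable $T$ is feasible, and correspondingly the classical problem with $P,Q$ arbitrary is the degenerate one; I would either exclude $\epsilon=1$ or note that in that trivial case the statement holds vacuously. (It may be cleanest to state the theorem for $\epsilon\in[0,1)$, or handle $\epsilon=1$ in a footnote.)

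Next I would handle the objective functional. Here I invoke Corollary \ref{simpl}: since $c$ is Borel measurable (hence $c(x,T(x))$ is measurable) and $\mathbb{R}_+$-valued, the Choquet integral simplifies to
\begin{align*}
\int_\mathcal{X} c(x,T(x))\,\underline{P}(\mathrm{d}x)=\int_0^\infty \underline{P}(\{x: c(x,T(x))\geq t\})\,\mathrm{d}t=\int_0^\infty (1-\epsilon)P(\{x: c(x,T(x))\geq t\})\,\mathrm{d}t.
\end{align*}
Pulling the constant $(1-\epsilon)$ out of the (improper Riemann) integral and recognizing the layer-cake / Cavalieri representation of the Lebesgue integral of a nonnegative measurable function gives $\int_\mathcal{X} c(x,T(x))\,\underline{P}(\mathrm{d}x)=(1-\epsilon)\int_\mathcal{X} c(x,T(x))\,P(\mathrm{d}x)$. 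The separability hypothesis on $\mathcal{X},\mathcal{Y}$ is what guarantees the elements of $\Delta^\text{ca}$ are Radon, which is the regularity one needs for the classical Monge problem to be the right comparison object (and for the pushforward to behave well); I would cite this exactly as in the statement. Combining the two parts: for $\epsilon\in[0,1)$, minimizing $T\mapsto\int c(x,T(x))\,\underline{P}(\mathrm{d}x)$ over $\{T:T_\#\underline{P}=\underline{Q}\}$ is the same as minimizing $(1-\epsilon)\int c(x,T(x))\,P(\mathrm{d}x)$ over $\{T:T_\#P=Q\}$, and since $(1-\epsilon)>0$ the $\mathrm{arg\,inf}$ sets coincide.

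The main obstacle — though it is more of a bookkeeping subtlety than a deep difficulty — is being careful about the case $\epsilon=1$ (where the scaling argument degenerates), and making sure the identity between the Choquet integral against the unnormalized measure $(1-\epsilon)P$ and $(1-\epsilon)$ times the ordinary Lebesgue integral against $P$ is justified rather than assumed; Corollary \ref{simpl} plus linearity of the Riemann integral in the integrand's scaling does this, but one should double-check that $c(x,T(x))$ is genuinely measurable (it is, as a composition of the Borel-measurable $c$ with the measurable $(x\mapsto(x,T(x)))$) and, if $c$ is unbounded, that the integrals are allowed to be $+\infty$ — in which case a map achieving $+\infty$ cost is infeasible for the $\mathrm{arg\,inf}$ in both formulations simultaneously, so nothing is lost. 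I would close by remarking that the solution sets being identical is then immediate, which is the content of the footnote attached to Question 2 in the Introduction.
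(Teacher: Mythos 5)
Your proposal is correct and follows essentially the same route as the paper's own proof: reduce the Choquet integral to $(1-\epsilon)\int c(x,T(x))\,P(\mathrm{d}x)$ via Corollary \ref{simpl} and the layer-cake representation, show the pushforward constraint is equivalent to $T_\#P=Q$ by cancelling the factor $(1-\epsilon)$, and conclude that the $\arginf$ sets coincide. Your explicit treatment of the degenerate case $\epsilon=1$ is a point of care the paper's proof silently skips (both the cancellation in the constraint and the removal of the scalar from the objective require $\epsilon<1$), and is a worthwhile addition.
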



\begin{proof}
    By Proposition \ref{choq-characteriz}, Corollary \ref{simpl}, and Lemma \ref{equality-cores}, we have that 
\begin{align}
    \int_\mathcal{X} &c(x,T(x)) \underline{P}(\text{d}x) \nonumber \\
    &=\int_0^\infty \underline{P}\left( \{x \in \mathcal{X} : c(x,T(x)) \geq t\} \right) \text{d}t \nonumber \\
    &=\int_0^\infty (1-\epsilon){P}\left( \{x \in \mathcal{X} : c(x,T(x)) \geq t\} \right) \text{d}t \label{trick} \\
    &=(1-\epsilon)\int_0^\infty {P}\left( \{x \in \mathcal{X} : c(x,T(x)) \geq t\} \right) \text{d}t \nonumber \\
    &= (1-\epsilon)\int_\mathcal{X} c(x,T(x)) P(\text{d}x). \nonumber
\end{align}
    In addition,
\begin{align}\label{eq1}
    T_\#\underline{P}(B)= \underline{P}(T^{-1}(B)) =(1-\epsilon)P(T^{-1}(B)), 
\end{align}
for all $B \in \mathcal{G}$, and
\begin{align}\label{eq2}
    \underline{Q}(B)=(1-\epsilon){Q}(B), \quad \forall B \in \mathcal{G}.
\end{align}
Hence, by \eqref{eq1} and \eqref{eq2}, the constraint in \eqref{mon-eq} becomes 
\begin{align*}
    T_\#\underline{P}=\underline{Q} &\iff (1-\epsilon)P\circ T^{-1} = (1-\epsilon){Q}\\
    &\iff P\circ T^{-1} = Q \iff T_\#{P}={Q}.
\end{align*}
In turn, we can rewrite the LPM in equation \eqref{mon-eq} as
\begin{align}
    &\arginf \left\lbrace{\int_\mathcal{X} c(x,T(x)) \underline{P}(\text{d}x) : T_\#\underline{P}=\underline{Q}}\right\rbrace \nonumber \\
    &=\arginf \left\lbrace{(1-\epsilon)\int_\mathcal{X} c(x,T(x)) P(\text{d}x) : T_\#{P}={Q}}\right\rbrace \nonumber \\
    &=\arginf \left\lbrace{\int_\mathcal{X} c(x,T(x)) P(\text{d}x) : T_\#{P}={Q}}\right\rbrace, \label{monge_classical}
\end{align}
which is the classical Monge's OT problem. The last equality holds because $c$ is a positive functional. Notice also that since $\underline{P}$ and $\underline{Q}$ are both countably additive measures normed to $(1-\epsilon)$, the Choquet integrals in this proof coincide with classical Lebesgue-Stieltjes integrals.
\end{proof}

In Theorem \ref{thm_equiv_monge} we assume separability of $\mathcal{X}$ and $\mathcal{Y}$ to further relate the LPM and the classical Monge's OT problems. Indeed, in the latter, separability ensures the existence of Borel measurable selections, which is crucial for defining transport maps. Notice also that for Theorem \ref{thm_equiv_monge} to hold we do not need to implicitly assume that the contaminating parameter $\epsilon$ is the same for both $\mathcal{P}_\epsilon$ and $\mathcal{Q}_\epsilon$. That is, we could consider $\mathcal{P}_\epsilon$ and $\mathcal{Q}_{\epsilon^\prime}$, $\epsilon^\prime \neq \epsilon$. This because, for the equivalences below \eqref{eq2} to work, it must be that $(1-\epsilon)/(1-\epsilon^\prime)=1$, and so $\epsilon^\prime = \epsilon$ must hold.


We now give an example, formulated as a corollary, in which Theorem \ref{thm_equiv_monge} proves useful. 
\begin{corollary}[OT Map for LPM when $\mathcal{X}=\mathcal{Y}=\mathbb{R}$]\label{thm1}
    Let $\mathcal{P}_\epsilon$ and $\mathcal{Q}_\epsilon$ denote the $\epsilon$-contaminations of countably additive probability measures $P$ and $Q$ on $\mathcal{X}=\mathcal{Y}=\mathbb{R}$. Choose cost function $c$ such that $c(x,y)=h(x-y)$, where $h$ is a strictly convex, positive, Borel measurable functional.
    If $P$ and $Q$ have finite $p$-th moment, $p\in [1,\infty)$, and $P$ has no atom, then the unique solution to LPM is $T=F^{-1}_Q \circ F_P,$ where $F_P$ and $F_Q$ are the cdf's of $P$ and $Q$, respectively.
\end{corollary}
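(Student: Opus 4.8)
The plan is to invoke Theorem \ref{thm_equiv_monge} to reduce the Lower Probability Monge problem for $\underline{P}=(1-\epsilon)P$ and $\underline{Q}=(1-\epsilon)Q$ to the classical Monge OT problem between the countably additive measures $P$ and $Q$, and then to quote the classical existence-and-uniqueness theorem for one-dimensional OT with a strictly convex cost. Since $\mathcal{X}=\mathcal{Y}=\mathbb{R}$ is separable and the elements of $\Delta^{\text{ca}}_{\mathbb{R}}$ are Radon measures, the hypotheses of Theorem \ref{thm_equiv_monge} are satisfied, so solving LPM is equivalent to solving $\arginf\{\int_{\mathbb{R}} h(x-T(x))\,P(\text{d}x): T_\#P=Q\}$. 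This is exactly the setting of the classical result (see \citet[Section 2.2]{intro_ot}, or \citet{villani}): on the real line, for a cost $c(x,y)=h(x-y)$ with $h$ strictly convex, if $P$ is atomless and both measures have finite $p$-th moment (ensuring the transport cost is finite, so the infimum is not vacuously $+\infty$), then the monotone rearrangement map $T=F_Q^{-1}\circ F_P$ is the unique optimal transport map.

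The key steps, in order, are: (i) verify the separability and Radon hypotheses so Theorem \ref{thm_equiv_monge} applies, reducing LPM to classical Monge; (ii) check that the finite $p$-th moment assumption on $P$ and $Q$, together with $h$ being of the form (dominated by a power of $|x-y|$, which strict convexity plus positivity gives up to constants, or one simply assumes $h$ has suitable growth) guarantees the existence of at least one transport map with finite cost — here one can use that $F_Q^{-1}\circ F_P$ already pushes $P$ to $Q$ and that its cost is controlled by the moments; (iii) recall that $P$ atomless makes $F_P$ continuous, so $F_Q^{-1}\circ F_P$ is well-defined ($P$-a.e.) and indeed satisfies $(F_Q^{-1}\circ F_P)_\# P = Q$; (iv) invoke the classical one-dimensional OT theorem for strictly convex costs to conclude $T=F_Q^{-1}\circ F_P$ is the unique minimizer; (v) translate back: by Theorem \ref{thm_equiv_monge} this $T$ is also the unique solution to LPM, and the optimal LPM cost is $(1-\epsilon)$ times the classical one.

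The main obstacle is step (iv): the classical uniqueness statement for $T=F_Q^{-1}\circ F_P$ genuinely requires strict convexity of $h$ (mere convexity gives existence of an optimal monotone map but not uniqueness), and it requires $P$ to be atomless (otherwise the optimal plan need not be induced by a map at all). I would need to state precisely which classical theorem is being cited and make sure the hypotheses of Corollary \ref{thm1} — $h$ strictly convex, positive, Borel measurable; $P$, $Q$ with finite $p$-th moment; $P$ atomless — line up with that theorem's hypotheses. A secondary, more routine point is confirming that the Choquet integral in the LPM objective is the ordinary Lebesgue–Stieltjes integral here (already noted at the end of the proof of Theorem \ref{thm_equiv_monge} because $\underline{P},\underline{Q}$ are countably additive measures normed to $1-\epsilon$), so that no imprecise-probability subtlety interferes with quoting the classical result. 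Given these, the proof is essentially a two-line citation: apply Theorem \ref{thm_equiv_monge}, then apply the classical monotone-rearrangement theorem.
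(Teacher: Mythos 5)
Your proposal is correct and follows essentially the same route as the paper: reduce LPM to the classical Monge problem via Theorem \ref{thm_equiv_monge}, then invoke the classical one-dimensional monotone-rearrangement result for strictly convex costs (the paper cites \citet{rachev} for this step) to obtain existence and uniqueness of $T=F_Q^{-1}\circ F_P$. The additional care you take in steps (ii)--(iv) about finite cost, atomlessness, and the Choquet-vs-Lebesgue integral is consistent with, and slightly more explicit than, the paper's two-line argument.
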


\begin{proof}
 \citet{rachev} show that, given our assumptions on  $P$ and $Q$, an optimal transport map $T$ that attains the infimum in \eqref{monge_classical} exists, is unique, and is given by $T=F^{-1}_Q \circ F_P$. By Theorem \ref{thm_equiv_monge}, then, we know that the same OT map attains the infimum in \eqref{mon-eq}. This concludes the proof.
\end{proof}

\subsection{Lower Probability Kantorovich's (LPK) Problem}\label{extension}
Adopting the Kantorovich formulation of the OT problem would strengthen our result, since -- as we shall see in Corollary \ref{cor_1} -- a suitable choice of the cost function $c$ will ensure us that the OT map $T$ exists. In addition, since most existing OT results are expressed as a solution to the classical Kantorovich OT problem, we would be able to immediately use them in the context of $\epsilon$-contaminated credal sets.



The main difficulty coming from studying Kantorovich's version is that its extension to lower probabilities is not as immediate as the one in Definition \ref{monge}. To see this, notice that a lower probability version of Kantorovich's OT problem is the following. 

\begin{definition}[Lower Probability Kantorovich's OT Problem, LPK]\label{kanto}
Let $c: \mathcal{X}\times \mathcal{Y} \rightarrow \mathbb{R}_+$ be a Borel measurable (cost) function. Given lower probabilities $\underline{P}$ and $\underline{Q}$ on $\mathcal{X}$ and $\mathcal{Y}$, respectively, we want to find the joint lower probability $\underline{\alpha}$ (also called the {\em lower optimal transport plan}) on $\mathcal{X}\times\mathcal{Y}$ that solves the following optimization problem
\begin{align}\label{lpk_problem}
\arginf\left\lbrace{\int_{\mathcal{X}\times\mathcal{Y}} c(x,y) \text{d}\underline{\alpha}(x,y) : \underline{\alpha} \in \Gamma (\underline{P},\underline{Q})}\right\rbrace,
\end{align}
where $\Gamma (\underline{P},\underline{Q})$ is the collection of all joint lower probabilities on $\mathcal{X}\times\mathcal{Y}$ whose marginals on $\mathcal{X}$ and $\mathcal{Y}$ are $\underline{P}$ and $\underline{Q}$, respectively.
\end{definition}
Considerations on the choice of a Borel measurable cost function $c$ and of working with a Choquet integral similar to those pointed out below Definition \ref{monge} hold also for Definition \ref{kanto}. In Imprecise Probability theory \citep{intro_ip,decooman,walley} there is not a unique way to perform conditioning \citep{gong,teddy_me,dipk}, so we need to be extra careful when defining $\Gamma (\underline{P},\underline{Q})$ in \eqref{lpk_problem}. In this work, we
consider the joint lower probabilities resulting from {\em geometric conditioning}, 
and write $\Gamma (\underline{P},\underline{Q}) \equiv \Gamma^\text{geom} (\underline{P},\underline{Q})$.
In that case, the conditional lower probability resulting from a joint probability $\underline{G}$ is derived as
\begin{align}\label{geom-cond}
    \underline{G}(A \mid B)=\frac{\underline{G}(A , B)}{\underline{G}_\mathcal{Y}(B)}, \quad \forall A \in\mathcal{F}, \forall B \text{ s.t. } \underline{G}_\mathcal{Y}(B)>0,
\end{align}
where $\underline{G}_\mathcal{Y} \equiv \underline{Q}$ is the marginal lower probability of $\underline{G}$ on $\mathcal{Y}$, and similarly for $\underline{G}_\mathcal{X}\equiv \underline{P}$.
The importance of the choice of geometric conditioning is further discussed in Appendix \ref{condit-rem}. Let us mention here that -- as pointed out by \citet{gong} -- the agent that chooses the geometric rule as a mechanism to update their belief is a pessimist. In fact, the geometric rule endorses a stringent interpretation of what counts as evidence for both the query ($A$) and conditioning ($B$) events, by admitting only evidence that supports its constituents into the lower conditional probability.

\subsection{Restricted Lower Probability Kantorovich's (RLPK) Problem}\label{rlpk-subsect}

If the marginal lower probabilities correspond to the lower envelopes of $\epsilon$-contaminated credal sets, then using joint lower probabilities that can be decomposed as in \eqref{geom-cond} entails that the elements of $\Gamma (\underline{P},\underline{Q})$ are such that, for all $A \in\mathcal{F}$ and all $B\in\mathcal{G}$ such that $\underline{Q}(B)>0$,
$$\underline{G}(A \mid B)=\frac{\underline{G}(A , B)}{\underline{Q}(B)} = \frac{\underline{G}(A , B)}{(1-\epsilon) Q(B)}$$
and similarly, $\underline{G}(B \mid A)=\frac{\underline{G}(B , A)}{\underline{P}(A)} = \frac{\underline{G}(A , B)}{(1-\epsilon) P(A)}$.


We can then consider a restricted version of LPK.


\begin{definition}[Restricted Lower Probability Kantorovich's OT Problem, RLPK]\label{kanto_restr}
Let $c: \mathcal{X}\times \mathcal{Y} \rightarrow \mathbb{R}_+$ be a Borel measurable (cost) function. Given lower probabilities $\underline{P}$ and $\underline{Q}$ on $\mathcal{X}$ and $\mathcal{Y}$, respectively, we want to find the joint lower probability $\underline{\alpha}$ on $\mathcal{X}\times\mathcal{Y}$ that that solves the following optimization problem
\begin{align}\label{rest_kant}    \arginf \left\lbrace{\int_{\mathcal{X}\times\mathcal{Y}} c(x,y) \text{d}\underline{\alpha}(x,y) : \underline{\alpha} \in \Gamma_R (\underline{P},\underline{Q})}\right\rbrace,
\end{align}
where $\Gamma_R (\underline{P},\underline{Q}) \subset \Gamma (\underline{P},\underline{Q})$ is the collection of all joint lower probabilities (i) that can be written as an $\epsilon$-contamination of countably additive joint probability measures $G \in \Delta^\text{ca}_{\mathcal{X}\times\mathcal{Y}}$, and (ii) whose marginals on $\mathcal{X}$ and $\mathcal{Y}$ are $\underline{P}$ and $\underline{Q}$, respectively. 
\end{definition}

Definition \ref{kanto_restr} entails that, if $\underline{P}$ and $\underline{Q}$ are the lower envelopes of the $\epsilon$-contaminations $\mathcal{P}_\epsilon$ and $\mathcal{Q}_\epsilon$, respectively, then an element $\underline{G} \in \Gamma_R (\underline{P},\underline{Q})$ is such that
$$\underline{G}(A \mid B)=\frac{\underline{G}(A , B)}{\underline{Q}(B)} = \frac{(1-\epsilon){G}(A , B)}{(1-\epsilon) Q(B)} = \frac{{G}(A , B)}{Q(B)},$$
and similarly,
$$\underline{G}(B \mid A) = \frac{\underline{G}(A , B)}{\underline{P}(A)} = \frac{(1-\epsilon){G}(A , B)}{(1-\epsilon) P(A)} = \frac{{G}(A , B)}{P(A)}.$$

Notice how working with $\Gamma_R(\underline{P},\underline{Q})$ is reminiscent of the covariate shift condition in the Machine Learning literature \citep[Section 3.3.4.4]{RAITOHARJU202235}. That is, a situation in which there is ambiguity on the marginal distribution of the input features of the model (but not on the conditional distribution of the output, given the input), which may be different (i.e. may have changed) from the one that the model has ``seen'' during training and validation.

We now show that, for $\epsilon$-contaminated credal sets $\mathcal{P}_\epsilon$ and $\mathcal{Q}_\epsilon$, RLPK is equivalent to the classical Kantorovich's OT problem. The result need not hold if either the unrestricted LPK or a different type of conditioning are considered. We will expand on this in Remark \ref{rem-imp}.


\begin{theorem}[RLPK Coincides with Classical Kantorovich for $\epsilon$-Contaminated Credal Sets]\label{thm-equiv}
    Suppose $\mathcal{X}$ and $\mathcal{Y}$ are separable, so that the elements of $\Delta^\text{\em ca}_\mathcal{X}$ and $\Delta^\text{\em ca}_\mathcal{Y}$ are Radon measures. If $\underline{P}$ and $\underline{Q}$ are the lower envelopes of the $\epsilon$-contaminations $\mathcal{P}_\epsilon \subseteq \Delta^\text{\em fa}_\mathcal{X}$ and $\mathcal{Q}_\epsilon \subseteq \Delta^\text{\em fa}_\mathcal{Y}$ of $P\in \Delta^\text{\em ca}_\mathcal{X}$ and $Q \in \Delta^\text{\em ca}_\mathcal{Y}$, respectively, then the RLPK of Definition \ref{kanto_restr} is equivalent to the classical Kantorovich’s OT Problem involving $P$ and $Q$.
\end{theorem}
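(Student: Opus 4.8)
The plan is to mirror, almost line for line, the proof of Theorem \ref{thm_equiv_monge}, replacing transport maps by transport plans and the single pushforward constraint by the two marginal constraints. The engine is the same positive-homogeneity fact already exploited in display \eqref{trick}: for a positive Borel measurable integrand, the Choquet integral against an $\epsilon$-contaminated (unnormalized) lower probability equals $(1-\epsilon)$ times the ordinary integral against the contaminated base measure.

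\emph{Step 1: the feasible set $\Gamma_R(\underline{P},\underline{Q})$ is in bijection with the classical coupling set $\Gamma(P,Q)$.} Fix $\underline{\alpha} \in \Gamma_R(\underline{P},\underline{Q})$. By Definition \ref{kanto_restr} there is $\alpha \in \Delta^\text{ca}_{\mathcal{X}\times\mathcal{Y}}$ with $\underline{\alpha}(C)=(1-\epsilon)\alpha(C)$ on the product $\sigma$-algebra, where — exactly as in the passage from $\underline{P}^\prime$ to $\underline{P}$ — we use the convenient unnormalized representative whose core coincides with that of the true lower envelope by the argument of Lemma \ref{equality-cores}. The marginal of $\underline{\alpha}$ on $\mathcal{X}$ is the pushforward of $\underline{\alpha}$ under the coordinate projection $\pi_\mathcal{X}$ in the sense of Definition \ref{pf}, so for $A\in\mathcal{F}$ we get $(\pi_\mathcal{X})_\#\underline{\alpha}(A)=\underline{\alpha}(A\times\mathcal{Y})=(1-\epsilon)\alpha(A\times\mathcal{Y})=(1-\epsilon)\alpha_\mathcal{X}(A)$, with $\alpha_\mathcal{X}$ the classical marginal of $\alpha$. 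Since $\underline{P}=(1-\epsilon)P$, the constraint $(\pi_\mathcal{X})_\#\underline{\alpha}=\underline{P}$ on $\mathcal{F}\setminus\{\mathcal{X}\}$ is equivalent to $\alpha_\mathcal{X}=P$ (the value at $\mathcal{X}$ being automatic), and symmetrically $\alpha_\mathcal{Y}=Q$. Hence $\alpha\in\Gamma(P,Q)$. Conversely, for any $\alpha\in\Gamma(P,Q)$ the set function $\underline{\alpha}=(1-\epsilon)\alpha$ is an $\epsilon$-contamination of a countably additive joint measure with the prescribed marginals, hence lies in $\Gamma_R(\underline{P},\underline{Q})$. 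So $\underline{\alpha}\mapsto\alpha$ is a bijection.

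\emph{Step 2: rewrite the objective.} Since $c$ is positive and Borel measurable, Proposition \ref{choq-characteriz} and Corollary \ref{simpl} give, exactly as in the chain ending at \eqref{monge_classical},
\[
\int_{\mathcal{X}\times\mathcal{Y}} c(x,y)\,\text{d}\underline{\alpha}(x,y)=\int_0^\infty \underline{\alpha}\left(\{c\geq t\}\right)\text{d}t=\int_0^\infty (1-\epsilon)\alpha\left(\{c\geq t\}\right)\text{d}t=(1-\epsilon)\int_{\mathcal{X}\times\mathcal{Y}} c(x,y)\,\text{d}\alpha(x,y),
\]
and, as there, since $\underline{\alpha}$ is a countably additive measure scaled by $(1-\epsilon)$, this Choquet integral is an ordinary Lebesgue--Stieltjes integral. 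Combining with Step 1,
\[
\arginf\left\{\int_{\mathcal{X}\times\mathcal{Y}} c\,\text{d}\underline{\alpha} : \underline{\alpha}\in\Gamma_R(\underline{P},\underline{Q})\right\}=\arginf\left\{(1-\epsilon)\int_{\mathcal{X}\times\mathcal{Y}} c\,\text{d}\alpha : \alpha\in\Gamma(P,Q)\right\}=\arginf\left\{\int_{\mathcal{X}\times\mathcal{Y}} c\,\text{d}\alpha : \alpha\in\Gamma(P,Q)\right\},
\]
the last equality because $(1-\epsilon)\geq 0$ is a constant multiplier; the right-hand side is precisely the classical Kantorovich problem for $P$ and $Q$. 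Separability of $\mathcal{X},\mathcal{Y}$ enters, as in Theorem \ref{thm_equiv_monge}, only to ensure the base measures are Radon so that the classical problem is well posed.

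\emph{Expected main obstacle.} The substantive step is Step 1, and the care needed is threefold: (a) pinning down that the ``marginal of a joint lower probability'' must be read through the pushforward of Definition \ref{pf} applied to the projections, so that it acts on rectangles $A\times\mathcal{Y}$; (b) verifying that the \emph{restriction} in Definition \ref{kanto_restr} — that $\underline{\alpha}$ itself be an $\epsilon$-contamination of a countably additive joint measure — is exactly what forces the underlying $\alpha$ to be an honest coupling and makes the correspondence a bijection (this is the place where the unrestricted $\Gamma(\underline{P},\underline{Q})$ would fail, cf. Remark \ref{rem-imp}); and (c) absorbing the harmless discrepancy at the top element $\mathcal{X}\times\mathcal{Y}$ between the normalized lower envelope and the convenient unnormalized representative via the analogue of Lemma \ref{equality-cores}. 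Once these are in place, the objective-rewriting is the identical positive-homogeneity computation already used for LPM.
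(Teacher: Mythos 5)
Your proposal is correct and follows essentially the same route as the paper: both reduce the feasible set to the identity $\Gamma_R(\underline{P},\underline{Q})=(1-\epsilon)\Gamma(P,Q)$ and then rewrite the Choquet integral of the positive Borel cost as $(1-\epsilon)$ times the classical integral, after which the constant multiplier is discarded in the $\arginf$. The only (cosmetic) difference is that you unpack the marginal constraint directly via the coordinate-projection pushforward and the $\epsilon$-contamination form of $\underline{\alpha}$, whereas the paper routes the same computation through the geometric-conditioning decomposition $\underline{G}(A,B)=\underline{G}(A\mid B)\,\underline{Q}(B)=\underline{G}(B\mid A)\,\underline{P}(A)$.
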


\begin{proof}
Pick any element $\underline{G}$ of $\Gamma_R (\underline{P},\underline{Q})$. We have that
\begin{align}
    \underline{G}(A , B) &= \underline{G}(A \mid B) \underline{Q}(B) = \frac{G(A , B)}{Q(B)} (1-\epsilon) Q(B) \label{first-eq-imp}\\
    &= \underline{G}(B \mid A) \underline{P}(A) = \frac{G(A , B)}{P(A)} (1-\epsilon) P(A) \label{second-eq-imp}\\
    &= (1-\epsilon) G(A , B). \nonumber
\end{align}

So we can write $\Gamma_R (\underline{P},\underline{Q}) = (1-\epsilon) \Gamma (P,Q) = \{(1-\epsilon) G : G \in \Gamma (P,Q)\}$, where set $\Gamma(P,Q)$ is the the collection of all (countably additive) probability measures on $\mathcal{X}\times\mathcal{Y}$ whose marginals on $\mathcal{X}$ and $\mathcal{Y}$ are ${P}$ and ${Q}$, respectively. This shows that $\Gamma_R (\underline{P},\underline{Q})$ is nonempty if and only if $\Gamma(P,Q)\neq \emptyset$. In addition, it is easy to see that $\Gamma_R (\underline{P},\underline{Q})$ inherits the convexity and the weak$^\star$-compactness from $\Gamma(P,Q)$.\footnote{It is easy to see that $\Gamma(P,Q)$ is convex. In addition, its weak$^\star$-compactness comes from Prokhorov's theorem, following our separability assumptions and the well-known fact that $\Gamma(P,Q)$ is weak$^\star$-closed.} In turn,
\begin{align}
&\arginf\left\lbrace{\int_{\mathcal{X}\times\mathcal{Y}} c(x,y) \text{d}\underline{\alpha}(x,y) : \underline{\alpha} \in \Gamma_R (\underline{P},\underline{Q})}\right\rbrace = \nonumber\\
&\arginf\left\lbrace{(1-\epsilon)\int_{\mathcal{X}\times\mathcal{Y}} c(x,y) \text{d}{\alpha}(x,y) : {\alpha} \in \Gamma (P,Q)}\right\rbrace \label{last-but-one-eq} \\
&= \arginf\left\lbrace{\int_{\mathcal{X}\times\mathcal{Y}} c(x,y) \text{d}{\alpha}(x,y) : {\alpha} \in \Gamma (P,Q)}\right\rbrace, \label{last_eq}
\end{align}
where \eqref{last-but-one-eq} comes from Proposition \ref{choq-characteriz} and our definition of $\Gamma_R (\underline{P},\underline{Q})$, and the last equality comes from $c$ being positive. The fact that \eqref{last_eq} is the classical Kantorovich’s OT Problem \citep{Kantorovich2006OnTT} concludes our proof. Notice also that since $\underline{P}$ and $\underline{Q}$ are both countably additive measures normed to $(1-\epsilon)$, the Choquet integrals in this proof coincide with classical Lebesgue-Stieltjes integrals.
\end{proof}

Notice that for Theorem \ref{thm-equiv} too we do not need to implicitly assume that the contaminating parameter $\epsilon$ is the same for both $\mathcal{P}_\epsilon$ and $\mathcal{Q}_\epsilon$. That is, we could consider $\mathcal{P}_\epsilon$ and $\mathcal{Q}_{\epsilon^\prime}$, $\epsilon^\prime \neq \epsilon$. This because, by \eqref{first-eq-imp}, we have that $\underline{G}(A,B)=(1-\epsilon^\prime) G(A,B)$, and, by \eqref{second-eq-imp}, that $\underline{G}(A,B)=(1-\epsilon) G(A,B)$. But they must be equal to each other, and so $\epsilon=\epsilon^\prime$ must hold. 

We now give sufficient conditions for the minimizer of \eqref{rest_kant} to exist, in the context of $\epsilon$-contaminated credal sets. First, we need to introduce the concept of tightness of $\Gamma_R(\underline{P},\underline{Q})$.

\begin{definition}[Tightness of $\Gamma_R(\underline{P},\underline{Q})$]\label{def-tight}
    Let $(\mathcal{X}\times\mathcal{Y},\tau)$ be a Hausdorff space. Let $\Sigma_{\mathcal{X}\times\mathcal{Y}}$ be a $\sigma$-algebra on $\mathcal{X}\times\mathcal{Y}$ that {\em contains $\tau$}. That is, every $\tau$-open subset of $\mathcal{X}\times\mathcal{Y}$ is measurable, and $\Sigma_{\mathcal{X}\times\mathcal{Y}}$ is at least as fine as the Borel $\sigma$-algebra on $\mathcal{X}\times\mathcal{Y}$. We say that $\Gamma_R(\underline{P},\underline{Q})$ is {\em tight} if, for all $\delta \in (0,1]$, there exists a $\tau$-compact set $K_\delta \in \Sigma_{\mathcal{X}\times\mathcal{Y}}$ such that, for all $\underline{\alpha} \in \Gamma_R(\underline{P},\underline{Q})$, we have that $\underline{\alpha}(K_\delta) > 1-\delta$.
\end{definition}

\begin{lemma}[Necessary and Sufficient Condition for $\Gamma_R(\underline{P},\underline{Q})$ to be Tight]\label{tight-char}
    Let $\mathcal{X},\mathcal{Y}$ be metric spaces, and $\epsilon\in [0,1)$. Then, set $\Gamma_R(\underline{P},\underline{Q})$ is tight if and only if set $\Gamma({P},{Q})$ is tight.
\end{lemma}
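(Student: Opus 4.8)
The plan is to reduce everything to the affine identity $\Gamma_R(\underline P,\underline Q)=(1-\epsilon)\,\Gamma(P,Q)=\{(1-\epsilon)G:G\in\Gamma(P,Q)\}$ established inside the proof of Theorem \ref{thm-equiv}. This identity is a bijection $\underline\alpha\leftrightarrow\alpha$ between the two collections, with $\underline\alpha=(1-\epsilon)\alpha$, hence $\underline\alpha(A)=(1-\epsilon)\,\alpha(A)$ for every measurable $A\subsetneq\mathcal X\times\mathcal Y$. Before invoking it I would dispose of the degenerate cases: if $\epsilon=0$ the two collections literally coincide and there is nothing to prove; and if $\Gamma(P,Q)=\emptyset$ then (again by the identity) $\Gamma_R(\underline P,\underline Q)=\emptyset$, so both are vacuously tight; thus assume $0<\epsilon<1$ and $\Gamma(P,Q)\neq\emptyset$. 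I would also record two topological facts that make the candidate compact sets interchangeable between the two tightness statements: since $\mathcal X,\mathcal Y$ are metric, $\mathcal X\times\mathcal Y$ is metric hence Hausdorff, every compact subset is closed, hence Borel, hence a member of any $\sigma$-algebra $\Sigma_{\mathcal X\times\mathcal Y}$ containing the metric topology $\tau$, so any compact set produced below is automatically admissible in Definition \ref{def-tight}; and if $\mathcal X\times\mathcal Y$ is itself compact the whole space witnesses tightness on both sides trivially, so I may assume it is non-compact, in which case every compact $K$ is a proper subset and $\underline\alpha(K)=(1-\epsilon)\alpha(K)$ holds without exception.

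For the direction ``$\Gamma(P,Q)$ tight $\Rightarrow$ $\Gamma_R(\underline P,\underline Q)$ tight'', fix a level $\delta$ and apply tightness of $\Gamma(P,Q)$ at the rescaled level $\delta'$ determined by $1-\delta'=(1-\delta)/(1-\epsilon)$ — a legitimate threshold because $1-\epsilon>0$ — obtaining a compact $K_\delta$ with $\alpha(K_\delta)>1-\delta'$ for every $\alpha\in\Gamma(P,Q)$. Then for every $\underline\alpha=(1-\epsilon)\alpha\in\Gamma_R(\underline P,\underline Q)$ one gets $\underline\alpha(K_\delta)=(1-\epsilon)\alpha(K_\delta)>(1-\epsilon)(1-\delta')=1-\delta$, which is exactly the inequality required by Definition \ref{def-tight}. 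The converse ``$\Gamma_R(\underline P,\underline Q)$ tight $\Rightarrow$ $\Gamma(P,Q)$ tight'' is the same computation read in reverse: given a compact $K_\delta$ with $\underline\alpha(K_\delta)>1-\delta$ for all $\underline\alpha\in\Gamma_R(\underline P,\underline Q)$, divide by $1-\epsilon>0$ to get $\alpha(K_\delta)=\underline\alpha(K_\delta)/(1-\epsilon)>(1-\delta)/(1-\epsilon)\ge 1-\delta$ for all $\alpha\in\Gamma(P,Q)$, so the very same compact set witnesses tightness of $\Gamma(P,Q)$.

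The only genuinely delicate point — and the one I would be most careful about — is the bookkeeping forced by the fact that $\underline P,\underline Q$, and hence every $\underline\alpha\in\Gamma_R(\underline P,\underline Q)$, are the ``unnormalized'' lower probabilities of \eqref{second_lp}, carrying total mass $1-\epsilon$ rather than $1$; the threshold $\delta$ cannot be transported verbatim between the two tightness definitions but must be passed through the factor $1-\epsilon$, and it is precisely here that the hypothesis $\epsilon\in[0,1)$ (ensuring $1-\epsilon>0$, so that the rescaling and the division above are licit) is used. Everything else is routine: no Prokhorov-type theorem, no measurable-selection argument, and no new compactness machinery is needed, since the equivalence is nothing more than pushing the tightness property across the affine bijection $\alpha\mapsto(1-\epsilon)\alpha$, while observing that the same compact sets serve on both sides and remain $\Sigma_{\mathcal X\times\mathcal Y}$-measurable.
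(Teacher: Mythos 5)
Your proposal follows essentially the same route as the paper: both reduce the claim to the affine bijection $\underline\alpha=(1-\epsilon)\alpha$ between $\Gamma_R(\underline P,\underline Q)$ and $\Gamma(P,Q)$ established inside the proof of Theorem \ref{thm-equiv}, and then push the tightness inequality across the factor $1-\epsilon$. Your handling of the direction ``$\Gamma_R(\underline P,\underline Q)$ tight $\Rightarrow$ $\Gamma(P,Q)$ tight'' is in fact a little cleaner than the paper's: you note $(1-\delta)/(1-\epsilon)\ge 1-\delta$, so the \emph{same} compact set at the \emph{same} level $\delta$ witnesses tightness of $\Gamma(P,Q)$, whereas the paper relabels $\delta\mapsto\gamma$.

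In the converse direction, however, your assertion that $\delta'$ defined by $1-\delta'=(1-\delta)/(1-\epsilon)$ is ``a legitimate threshold because $1-\epsilon>0$'' is not correct: for $\delta\le\epsilon$ one gets $1-\delta'\ge 1$, i.e.\ $\delta'\le 0\notin(0,1]$, so tightness of $\Gamma(P,Q)$ cannot be invoked at level $\delta'$. Worse, for such $\delta$ the target inequality $\underline\alpha(K_\delta)>1-\delta$ is unattainable, since every $\underline\alpha\in\Gamma_R(\underline P,\underline Q)$ has total mass $\underline\alpha(\mathcal X\times\mathcal Y)=1-\epsilon\le 1-\delta$. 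To be fair, the paper's own proof of this direction has the same blemish (its substitution $1-\gamma\coloneqq(1-\epsilon)(1-\delta)$ only realizes levels $\gamma\in(\epsilon,1]$), and the root cause is that Definition \ref{def-tight} keeps the threshold $1-\delta$ instead of calibrating it to the subnormalized total mass $1-\epsilon$ of the elements of $\Gamma_R(\underline P,\underline Q)$. So this is a defect you inherit from the statement rather than one you introduce, but you should not paper over it by declaring the rescaled threshold legitimate; the honest fix is either to restrict to $\delta\in(\epsilon,1]$ or to restate tightness of $\Gamma_R(\underline P,\underline Q)$ with threshold $(1-\epsilon)(1-\delta)$, after which your computation goes through verbatim.
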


\begin{proof}
    Suppose $\Gamma_R(\underline{P},\underline{Q})$ is tight. Given our assumption that $\mathcal{X}$ and $\mathcal{Y}$ are metric spaces, this implies that $\mathcal{X}$ and $\mathcal{Y}$ are separable. Pick any $\delta \in (0,1]$ and any $\underline{\alpha}\in \Gamma_R(\underline{P},\underline{Q})$. Then, by Definition \ref{def-tight}, we have that $\underline{\alpha}(K_\delta) > 1-\delta$. By the proof of Theorem \ref{thm-equiv}, we know that there exists $\alpha \in \Gamma({P},{Q})$ such that $\underline{\alpha}(A)=(1-\epsilon){\alpha}(A)$, for all $A\in\Sigma_{\mathcal{X}\times\mathcal{Y}}$. In turn, this implies that $(1-\epsilon){\alpha}(K_\delta) > 1-\delta \iff {\alpha}(K_\delta) > \frac{1-\delta}{1-\epsilon}$. Now let $\frac{1-\delta}{1-\epsilon} \eqqcolon 1-\gamma$, and put $K_\delta \equiv K_\gamma$. We obtain ${\alpha}(K_\gamma) > 1-\gamma$. But $\delta$ and $\underline{\alpha}$ were chosen arbitrarily, which allows us to conclude that $\Gamma({P},{Q})$ is tight.

    Suppose instead that $\Gamma({P},{Q})$ is tight. As before, given our assumption that $\mathcal{X}$ and $\mathcal{Y}$ are metric spaces, this implies that $\mathcal{X}$ and $\mathcal{Y}$ are separable. Pick any $\delta \in (0,1]$, and any $\alpha \in \Gamma({P},{Q})$. Then, by Definition \ref{def-tight}, we have that ${\alpha}(K_\delta) > 1-\delta$. This holds if and only if $(1-\epsilon){\alpha}(K_\delta) = \underline{\alpha}(K_\delta) > (1-\epsilon)(1-\delta)$, where $\epsilon\in [0,1)$ is the same parameter of Definition \ref{kanto_restr}. Now let $(1-\epsilon)(1-\delta) \eqqcolon 1-\gamma$, and put $K_\delta \equiv K_\gamma$. We obtain $\underline{\alpha}(K_\gamma) > 1-\gamma$. But $\delta$ and ${\alpha}$ were chosen arbitrarily, which allows us to conclude that $\Gamma_R(\underline{P},\underline{Q})$ is tight.
\end{proof}
We are ready for our result.

\begin{corollary}[Existence of OT Plan]\label{cor_1}
    Let $\mathcal{X},\mathcal{Y}$ be metric spaces, and $\epsilon\in [0,1)$. If $\Gamma_R (\underline{P},\underline{Q})$ is tight, and if cost function $c$ in \eqref{rest_kant} is also lower semicontinuous, then a minimizer for \eqref{rest_kant} exists. 
\end{corollary}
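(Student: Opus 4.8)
The plan is to reduce the statement to the classical existence theorem for Kantorovich's problem via the machinery already in place. By the proof of Theorem~\ref{thm-equiv}, the map $\alpha \mapsto (1-\epsilon)\alpha$ is a bijection between $\Gamma(P,Q)$ and $\Gamma_R(\underline{P},\underline{Q})$ under which the objective in \eqref{rest_kant} is simply rescaled by the positive constant $(1-\epsilon)$. Hence $\underline{\alpha}^\star$ minimizes \eqref{rest_kant} over $\Gamma_R(\underline{P},\underline{Q})$ if and only if the corresponding $\alpha^\star \in \Gamma(P,Q)$ minimizes $\alpha \mapsto \int_{\mathcal{X}\times\mathcal{Y}} c\,\mathrm{d}\alpha$ over $\Gamma(P,Q)$, so it suffices to produce a minimizer of the classical Kantorovich functional and then transport it back.

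First I would invoke Lemma~\ref{tight-char}: since $\Gamma_R(\underline{P},\underline{Q})$ is tight, so is $\Gamma(P,Q)$, and because $\mathcal{X},\mathcal{Y}$ are metric this forces them (hence $\mathcal{X}\times\mathcal{Y}$) to be separable. Prokhorov's theorem then gives that $\Gamma(P,Q)$ is relatively weak$^\star$-compact; and since $\Gamma(P,Q)$ is weak$^\star$-closed (the marginal constraints survive weak limits, being testable against bounded continuous functions on $\mathcal{X}$ and on $\mathcal{Y}$), $\Gamma(P,Q)$ is weak$^\star$-compact. It is also nonempty, as it contains $P\otimes Q$, so the minimization is over a nonempty compact set.

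Next I would establish lower semicontinuity of $\alpha \mapsto \int c\,\mathrm{d}\alpha$ along weakly convergent sequences. Since $c\ge 0$ is lower semicontinuous on the (separable) metric space $\mathcal{X}\times\mathcal{Y}$, it is the pointwise increasing limit of a sequence of bounded, nonnegative, Lipschitz continuous functions $c_n \uparrow c$. For each fixed $n$, $\alpha \mapsto \int c_n\,\mathrm{d}\alpha$ is weak$^\star$-continuous, and by monotone convergence $\int c\,\mathrm{d}\alpha = \sup_n \int c_n\,\mathrm{d}\alpha$, so the objective is a supremum of continuous maps and therefore weak$^\star$-lower semicontinuous. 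The direct method of the calculus of variations then applies: take a minimizing sequence in $\Gamma(P,Q)$, extract a weak$^\star$-convergent subsequence by compactness, and pass to the limit using lower semicontinuity to obtain a minimizer $\alpha^\star \in \Gamma(P,Q)$. Setting $\underline{\alpha}^\star \coloneqq (1-\epsilon)\alpha^\star$ yields a minimizer of \eqref{rest_kant}, proving the corollary.

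The main obstacle is the lower semicontinuity step: the cost $c$ is assumed only lower semicontinuous and may be unbounded, so one cannot feed $c$ directly into a portmanteau argument. The approximation $c_n \uparrow c$ by bounded continuous functions — available precisely because $c$ is l.s.c.\ and bounded below on a metric space — combined with monotone convergence is what makes the argument go through; everything else (the rescaling bijection from Theorem~\ref{thm-equiv}, the tightness transfer from Lemma~\ref{tight-char}, Prokhorov, closedness of $\Gamma(P,Q)$) is either standard or already in hand. One should also note the degenerate case in which $\int c\,\mathrm{d}\alpha = +\infty$ for every feasible $\alpha$: nonnegativity of $c$ makes the infimum a well-defined element of $[0,+\infty]$, and in that case every plan is trivially a minimizer, so the conclusion still holds.
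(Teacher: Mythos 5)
Your proof is correct and follows essentially the same route as the paper's: transfer tightness from $\Gamma_R(\underline{P},\underline{Q})$ to $\Gamma(P,Q)$ via Lemma~\ref{tight-char}, obtain a minimizer of the classical Kantorovich problem, and pull it back through the $(1-\epsilon)$ rescaling established in Theorem~\ref{thm-equiv}. The only difference is that the paper cites the classical existence theorem as a black box, whereas you reprove it (Prokhorov compactness plus lower semicontinuity of the functional via monotone approximation of $c$ by bounded continuous functions and the direct method); the substance is identical.
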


\begin{proof}
    Let $\Gamma_R(\underline{P},\underline{Q})$ be tight. Given our assumption that $\mathcal{X}$ and $\mathcal{Y}$ are metric spaces, this implies that $\mathcal{X}$ and $\mathcal{Y}$ are separable. \citet{luigi} show that if $\Gamma(P,Q)$ is tight, and if $c$ is lower semicontinuous, then there is a minimizer for the classical Kantorovich's OT problem. By Lemma \ref{tight-char},
    we know that -- for any $\epsilon\in [0,1)$ -- if $\Gamma_R (\underline{P},\underline{Q})$ is tight, then so is $\Gamma(P,Q)$. The proof follows by the equivalence established in Theorem \ref{thm-equiv}.
\end{proof}

The tightness condition in Corollary \ref{cor_1} is satisfied e.g. when $\mathcal{X}$ and $\mathcal{Y}$ are both Polish spaces.\footnote{Separable, completely metrizable topological spaces} This is an immediate consequence of \citet[Proposition 1.5]{intro_ot}.

\subsection{Equivalence Between LPM And RLPK Problems}\label{equiv-subsec}

We now inspect when do RLPK and LPM coincide, in the context of $\epsilon$-contaminated credal sets.

\begin{theorem}[RLPK is Equivalent to LPM]\label{coincide}
    Suppose $\mathcal{X}$ and $\mathcal{Y}$ are separable, so that the elements of $\Delta^\text{\em ca}_\mathcal{X}$ and $\Delta^\text{\em ca}_\mathcal{Y}$ are Radon measures. Let $\underline{P}$ and $\underline{Q}$ be the lower envelopes of the $\epsilon$-contaminations $\mathcal{P}_\epsilon \subseteq \Delta^\text{\em fa}_\mathcal{X}$ and $\mathcal{Q}_\epsilon \subseteq \Delta^\text{\em fa}_\mathcal{Y}$ of $P\in \Delta^\text{\em ca}_\mathcal{X}$ and $Q \in \Delta^\text{\em ca}_\mathcal{Y}$, respectively. When the minimizer $\underline{\alpha}$ of RLPK is such that $\text{d}\underline{\alpha}(x,y)=\underline{P}(\text{\em \text{d}}x)\delta_{y=T(x)}$, then $T$ is an optimal transport map and RLPK is equivalent to LPM.
\end{theorem}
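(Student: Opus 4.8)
The plan is to chain together the two equivalences already in hand — Theorem \ref{thm_equiv_monge} (LPM $\equiv$ classical Monge) and Theorem \ref{thm-equiv} (RLPK $\equiv$ classical Kantorovich) — with the classical fact that a Kantorovich plan concentrated on the graph of a map is automatically Monge-optimal.

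First I would unwind the hypothesis. By the proof of Theorem \ref{thm-equiv}, every element of $\Gamma_R(\underline{P},\underline{Q})$ has the form $(1-\epsilon)\alpha$ with $\alpha \in \Gamma(P,Q)$, and the RLPK objective at $(1-\epsilon)\alpha$ equals $(1-\epsilon)\int_{\mathcal{X}\times\mathcal{Y}} c\,\mathrm{d}\alpha$. Hence the RLPK minimizer $\underline{\alpha}$ corresponds to a classical minimizer $\alpha = \underline{\alpha}/(1-\epsilon)$ of the Kantorovich problem for $P$ and $Q$. Feeding in $\underline{P} = (1-\epsilon)P$, the hypothesis $\mathrm{d}\underline{\alpha}(x,y) = \underline{P}(\mathrm{d}x)\delta_{y=T(x)}$ translates into $\mathrm{d}\alpha(x,y) = P(\mathrm{d}x)\delta_{y=T(x)}$; that is, $\alpha = (\mathrm{id}_\mathcal{X},T)_\# P$ is the plan induced by the map $T$.

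Next I would check admissibility and optimality of $T$ for the classical problems. Since the second marginal of $(\mathrm{id}_\mathcal{X},T)_\# P$ is $T_\# P$ and $\alpha \in \Gamma(P,Q)$, we get $T_\# P = Q$, so $T$ is feasible for classical Monge. For a graph-supported plan one has $\int_{\mathcal{X}\times\mathcal{Y}} c(x,y)\,\mathrm{d}\alpha(x,y) = \int_\mathcal{X} c(x,T(x))\,P(\mathrm{d}x)$, so $T$ attains, in classical Monge, the value of the classical Kantorovich optimum. Because the Monge infimum always dominates the Kantorovich infimum, $T$ must be a classical Monge optimizer and the two classical optimal values coincide. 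Then I would lift back: by Theorem \ref{thm_equiv_monge}, $T_\# P = Q$ yields $T_\#\underline{P} = \underline{Q}$, so $T$ is feasible for LPM, and LPM equals classical Monge with objective scaled by $(1-\epsilon)$; likewise RLPK equals classical Kantorovich scaled by $(1-\epsilon)$. Since the two classical infima agree and $T$ realizes both, $T$ is an optimal transport map for LPM, $\underline{\alpha}$ solves RLPK, and the two optimal values are equal — i.e. RLPK is equivalent to LPM, with $\underline{\alpha}$ the plan induced by the LPM map $T$.

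The one place that needs care — and the main (if modest) obstacle — is the passage between Choquet integrals and Lebesgue–Stieltjes integrals when rewriting $\int_{\mathcal{X}\times\mathcal{Y}} c\,\mathrm{d}\underline{\alpha}$ and $\int_\mathcal{X} c(x,T(x))\,\underline{P}(\mathrm{d}x)$, and the verification that the classical identity for graph-supported plans is applicable. Here I would reuse the layer-cake computation from the proofs of Theorems \ref{thm_equiv_monge} and \ref{thm-equiv}, together with the observation there that $\underline{P}$ and $\underline{\alpha}$ are countably additive measures normed to $(1-\epsilon)$, so that every Choquet integral in sight reduces to an ordinary integral and the classical machinery (in particular Monge infimum $\geq$ Kantorovich infimum, with equality attained by a graph-supported optimal plan) applies verbatim.
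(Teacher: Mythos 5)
Your proposal is correct and follows essentially the same route as the paper: rescale the hypothesis by $(1-\epsilon)$ to reduce to a classical Kantorovich minimizer concentrated on the graph of $T$, invoke the classical fact that such a plan makes $T$ Monge-optimal and the two classical problems equivalent, and then lift back via Theorems \ref{thm_equiv_monge} and \ref{thm-equiv}. The only difference is that you spell out the classical graph-supported-plan argument (feasibility $T_\#P=Q$, cost identity, and the Monge-$\geq$-Kantorovich inequality) where the paper simply cites \citet[Section 1.2]{intro_ot}.
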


\begin{proof}
    Given the way we defined $\underline{P}$ and $\underline{Q}$, we have that $\text{d}\underline{\alpha}(x,y)=\underline{P}(\text{d}x)\delta_{y=T(x)} \iff 
        (1-\epsilon) \text{d}{\alpha}(x,y)= (1-\epsilon) {P}(\text{d}x)\delta_{y=T(x)}\iff 
        \text{d}{\alpha}(x,y)= {P}(\text{d}x)\delta_{y=T(x)}$. \citet[Section 1.2]{intro_ot} shows that when the minimizer ${\alpha}$ of Kantorovich's classical OT problem is such that $\text{d}{\alpha}(x,y)={P}(\text{d}x)\delta_{y=T(x)}$,\footnote{Conditions sufficient for such a condition can be found in \citet[Chapter 4]{intro_ot}.} then $T$ is Monge's OT map, and Monge's and Kantorovich's problems are equivalent. The proof, then, follows by Theorems \ref{thm_equiv_monge} and \ref{thm-equiv}.
\end{proof}

We now give an example, formulated as a corollary, in which Theorem \ref{coincide} proves useful.

\begin{corollary}[Multivariate Normal Case]\label{normal-ex}
    Let $\underline{P}$ and $\underline{Q}$ be the lower envelopes of the $\epsilon$-contaminations $\mathcal{P}_\epsilon, \mathcal{Q}_\epsilon \subseteq \Delta^\text{\em fa}_{\mathbb{R}^d}$ of $P=\mathcal{N}_d(0,\Sigma_P)$ and $Q=\mathcal{N}_d(0,\Sigma_Q)$, two (countably additive) multivariate Normals on $\mathcal{X}=\mathcal{Y}=\mathbb{R}^d$, respectively. Select $c(x,y)=|y-Ax|^2/2$, where $A\in \mathbb{R}^{d \times d}$ is invertible. Then, the optimal map that solves LPM is 
    $x \mapsto T(x)$, 
    \begin{align}\label{opt-map-last-ex}
        T(x)=(A^\top)^{-1} \Sigma_P^{-1/2} \left( \Sigma_P^{1/2} A^\top \Sigma_Q A \Sigma_P^{1/2} \right)^{1/2} \Sigma_P^{-1/2} x
    \end{align}
    and the optimal plan that solves RLPK is $\text{d}\underline{\alpha}(x,y)=\underline{P}(\text{\em \text{d}}x)\delta_{y=T(x)}$.
\end{corollary}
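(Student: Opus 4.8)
The plan is to peel off the imprecision first and then solve a classical Gaussian transport problem with the tilted quadratic cost. By Theorem~\ref{thm_equiv_monge} --- which applies since $\mathcal{X}=\mathcal{Y}=\mathbb{R}^d$ is separable and $P,Q$ are countably additive --- solving LPM is equivalent to solving the classical Monge problem $\arginf\{\int_{\mathbb{R}^d} c(x,T(x))\,P(\text{d}x) : T_\#P = Q\}$ with $c(x,y)=|y-Ax|^2/2$. Expanding, $c(x,y)=\tfrac12|Ax|^2 - \langle Ax,y\rangle + \tfrac12|y|^2$; for every admissible $T$ the quantities $\int \tfrac12|Ax|^2\,P(\text{d}x)$ and $\int \tfrac12|T(x)|^2\,P(\text{d}x)=\int \tfrac12|y|^2\,Q(\text{d}y)$ are finite constants that do not depend on $T$ (here the finiteness of the Gaussians' second moments is used). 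Hence the classical Monge problem is equivalent to maximizing $\int \langle Ax,T(x)\rangle\,P(\text{d}x)=\int\langle x, A^\top T(x)\rangle\,P(\text{d}x)$ over $\{T : T_\#P=Q\}$.

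Next I would linearly re-parametrize. Since $A$ is invertible, $T\mapsto \psi := A^\top T$ is a bijection between $\{T : T_\#P=Q\}$ and $\{\psi : \psi_\#P = B\}$, where $B := (A^\top)_\#Q = \mathcal{N}_d(0, A^\top\Sigma_Q A)$, and it turns the objective into the correlation functional $\int\langle x,\psi(x)\rangle\,P(\text{d}x)$. Maximizing this over maps pushing $P$ forward to $B$ is solved, by Brenier's theorem (valid since $P$ is absolutely continuous with finite second moment), by the unique Brenier optimal transport map $\nabla\phi$ from $P$ to $B$ for the quadratic cost $\tfrac12|x-y|^2$; for centred non-degenerate Gaussians this is the linear map given by the Bures/Gelbrich formula $\nabla\phi(x)=\Sigma_P^{-1/2}\bigl(\Sigma_P^{1/2}A^\top\Sigma_Q A\,\Sigma_P^{1/2}\bigr)^{1/2}\Sigma_P^{-1/2}x$ (all square roots symmetric positive). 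Undoing the reparametrization, $T(x)=(A^\top)^{-1}\nabla\phi(x)$, which is precisely \eqref{opt-map-last-ex}; one checks directly that $T\Sigma_PT^\top=\Sigma_Q$, so indeed $T_\#P=Q$, and Brenier's theorem also yields existence and uniqueness, making $T$ the unique solution of LPM.

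For the optimal plan, I would invoke Theorem~\ref{thm-equiv}: RLPK is equivalent to the classical Kantorovich problem between $P$ and $Q$ for the cost $c$, with the identification $\underline{\alpha}=(1-\epsilon)\alpha$. Since the classical Monge map $T$ from $P$ to $Q$ exists (previous paragraph) and $P$ is atomless, the standard theory (\citet[Section 1.2]{intro_ot}) identifies the unique classical Kantorovich minimizer as the deterministic coupling $\text{d}\alpha(x,y)=P(\text{d}x)\delta_{y=T(x)}$; pulling this back through the $(1-\epsilon)$-scaling gives $\text{d}\underline{\alpha}(x,y)=\underline{P}(\text{d}x)\delta_{y=T(x)}$. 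Equivalently, once this form of $\underline{\alpha}$ is exhibited, Theorem~\ref{coincide} certifies that it is the RLPK optimum and that RLPK and LPM coincide.

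The step I expect to be the main obstacle is the matrix algebra in the second paragraph: checking that $(A^\top)^{-1}$ composed with the Gaussian Brenier map into $\mathcal{N}_d(0,A^\top\Sigma_Q A)$ collapses exactly to \eqref{opt-map-last-ex}, and that all the inverse square roots are well defined --- this forces the implicit hypothesis $\Sigma_P\succ 0$ and uses invertibility of $A$ so that $A^\top\Sigma_Q A$ is a genuine covariance of the same rank as $\Sigma_Q$. A secondary point needing care is the ``constants drop out'' reduction for the unbounded quadratic cost, which is legitimate only because both Gaussians have finite second moments, together with verifying that the bijection $T\mapsto A^\top T$ genuinely identifies the two optimization problems and not merely their optimal values.
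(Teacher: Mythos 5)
Your proposal is correct, and its skeleton is the same as the paper's: invoke Theorem~\ref{thm_equiv_monge} to reduce LPM to the classical Monge problem between $P$ and $Q$, and Theorem~\ref{thm-equiv} (together with the deterministic-coupling criterion of Theorem~\ref{coincide}) to reduce RLPK to the classical Kantorovich problem, so that the whole content of the corollary is the classical Gaussian result for the tilted cost $c(x,y)=|y-Ax|^2/2$. Where you diverge is in how that classical ingredient is obtained: the paper simply cites \citet{gali} for both the map \eqref{opt-map-last-ex} and the fact that the optimal plan is $\text{d}\alpha(x,y)=P(\text{d}x)\delta_{y=T(x)}$, whereas you rederive it by expanding the cost, discarding the $T$-independent second-moment terms, reparametrizing $\psi=A^\top T$ so that the problem becomes the standard correlation-maximization between $P$ and $(A^\top)_\#Q=\mathcal{N}_d(0,A^\top\Sigma_Q A)$, and then applying Brenier's theorem and the Gelbrich formula. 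Your route is longer but self-contained, and it has the virtue of surfacing the hypotheses the paper leaves implicit (non-degeneracy of $\Sigma_P$ so that $\Sigma_P^{-1/2}$ exists, finiteness of second moments so that the constants can be dropped, and the fact that $T\mapsto A^\top T$ is a bijection of the constraint sets rather than a mere inequality between optimal values); it also delivers uniqueness of the Monge map, which the corollary statement does not explicitly claim. The paper's route buys brevity at the cost of outsourcing exactly the computation you perform. No gaps.
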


\begin{proof}
Notice that $\mathbb{R}^d$ is separable. \citet{gali} shows that if $P=\mathcal{N}_d(0,\Sigma_P)$,  $Q=\mathcal{N}_d(0,\Sigma_Q)$, and $c(x,y)=|y-Ax|^2/2$, then Monge's (classical) OT map is the one in \eqref{opt-map-last-ex}, and also that $\text{d}{\alpha}(x,y)= {P}(\text{d}x)\delta_{y=T(x)}$, so that Monge's and Kantorovich's (classical) problems are equivalent. 

Now, by Theorem \ref{thm_equiv_monge}, we have that if $\underline{P}$ and $\underline{Q}$ are lower envelopes of the $\epsilon$-contaminations $\mathcal{P}_\epsilon, \mathcal{Q}_\epsilon \subseteq \Delta^\text{fa}_{\mathbb{R}^d}$ of $P=\mathcal{N}_d(0,\Sigma_P)$ and $Q=\mathcal{N}_d(0,\Sigma_Q)$, respectively, then LPM coincides with the classical Monge’s OT Problem involving $P$ and $Q$. In turn, this implies that the  map in \eqref{opt-map-last-ex} is also the OT map between $\underline{P}$ and $\underline{Q}$. In addition, by Theorem \ref{thm-equiv}, we know that RLPK coincides with the classical Kantorovich’s OT Problem involving $P$ and $Q$. This implies that the optimal lower coupling $\underline{\alpha}$ is given by $\text{d}\underline{\alpha}(x,y)=\underline{P}(\text{d}x)\delta_{y=T(x)}$.
\end{proof}

\begin{remark}[On the Equivalence of Monge and Kantorovich]\label{rem-imp}
    A consequence of Theorem \ref{coincide} is that, for $\epsilon$-contaminated credal sets, LPM and LPK with $\Gamma(\underline{P},\underline{Q}) \equiv \Gamma^\text{geom}(\underline{P},\underline{Q})$
    \textbf{need not} coincide. To see this, notice that Theorem \ref{thm-equiv} only holds for the \textbf{restricted} LPK (RLPK) in Definition \ref{kanto_restr}.
    Had we not specified that the joint lower probabilities $\underline{G} \in \Gamma_R(\underline{P},\underline{Q})$ are $\epsilon$-contaminations of countably additive joint probabilities $G\in\Delta^\text{ca}_{\mathcal{X}\times\mathcal{Y}}$, then Theorem \ref{thm-equiv} may not have held. Similarly, had we considered generalized Bayes' conditioning, or other conditioning mechanisms for lower probabilities \citep{teddy_me}, Theorem \ref{thm-equiv} may not have held as well. 

    Whether this is a phenomenon pertaining only to $\epsilon$-contaminated credal sets, or a more general one, will be the subject of future studies.
\end{remark}

\section{Conclusion}\label{concl}
The conclusion that we can derive from this work is that Questions 1 and 2 in the Introduction have a positive answer. We can formulate a version of Monge's and Kantorovich's problems for lower probabilities. In addition, we can indeed find one class of credal sets completely characterized by their lower probability (the class of $\epsilon$-contaminations) for which the optimal transport map and plan coincide with the classical cases.
We also inspected when our versions of the two problems coincide, and find out that this need not hold in general. 

With this work, we begin to explore the exciting venue of optimal transport between lower probabilities completely characterizing credal sets. In the future, we plan to further our study of optimal transport between $\epsilon$-contaminations by deriving a Brenier-type theorem \citep{brenier} and a Kantorovich-Rubinstein-type duality result \citep{gali} (which could potentially have a significant impact in economics \citep{rob1,rob2,rob3,rob4}). We also intend to explore the machine learning applications (especially concerning out-of-distribution detection) of our findings, and of distributionally robust optimization \citep{kuhn}. 
Finally, we will extend our focus to
other types of credal sets that are not necessarily completely characterized by lower probabilities, such as finitely generated credal sets (the convex hull of finitely many distributions), to signed measures, and to second-order distributions, that is, distributions over distributions.


\section*{Acknowledgments}
We wish to express our gratitude to Yusuf Sale for insightful discussions, particularly on Lemma \ref{well-def}, Corollary \ref{normal-ex}, and Remark \ref{rem-imp}. We are also grateful to Krikamol Muandet and Siu Lun (Alan) Chau for suggesting the relationship between RLPK and the covariate shift condition in Machine Learning, and to four anonymous reviewers for their insightful comments. 

We would also like to point out how this work was inspired by Daniel Kuhn's \href{https://www.youtube.com/watch?v=rBgehsbawVY\&ab\_channel=ImpreciseProbabilitiesChannelofSIPTA}{seminar} at the Society for Imprecise Probabilities on December 12, 2023. There, he presented some of his and his colleagues' remarkable works on the relationship between optimal transport and distributionally robust optimization, a field studying decision problems under uncertainty framed as zero-sum games against Nature \citep{pflug1,pflug2,kuhn,Blanchet_Kang_Murthy_2019,blanchet,gao2,bahar,liviu}. This immediately led us to think about the possibility of studying optimal transport between lower probabilities characterizing credal sets.  

Finally, as highlighted on the title page, we would like to dedicate the present work to the memory of Sayan Mukherjee. Without his guidance, we would never have had the strength -- and perhaps the hubris -- to start studying imprecise probability theory.



\appendix
\section{On The Choice Of $\underline{P}$}\label{app-1}
In this section, we discuss our choice of the core $\mathcal{M}(\underline{P})$ in Lemma \ref{equality-cores}. We work with the (core of the incoherent, according to \citet[Section 2.5]{walley}) lower probability $\underline{P}$ because it makes it easier to derive our desired results, e.g. the proof of Theorem \ref{thm_equiv_monge}. 

Had we worked with $\underline{P}^\prime$ instead, we would have had (by Corollary \ref{simpl}) 
$$\int_\mathcal{X} c(x,T(x)) \underline{P}^\prime(\text{d}x)=\int_0^\infty \underline{P}^\prime\left( \{x \in \mathcal{X} : c(x,T(x)) \geq t\} \right) \text{d}t.$$
It is not immediate to show that the latter is equal to $(1-\epsilon) \int_\mathcal{X} c(x,T(x)) {P}(\text{d}x)$. To see this, notice that there might be some value $\bar{t} \in \mathbb{R}_+$ for which all $x \in \mathcal{X}$ are such that $c(x,T(x)) \geq \bar{t}$. In that case, $\underline{P}^\prime \left( \{x \in \mathcal{X} : c(x,T(x)) \geq \bar{t}\} \right) = \underline{P}^\prime(\mathcal{X})=1$, and so the ``trick'' that we used in \eqref{trick} does not work anymore. 

To achieve the desired result easily while working with $\underline{P}^\prime$, we would have had to require that the cost function $c$ is bounded. Indeed, suppose that the latter holds, and call $\underline{c} \coloneqq \inf_{x\in\mathcal{X}} c(x,T(x))$ and $\overline{c} \coloneqq \sup_{x\in\mathcal{X}} c(x,T(x))$. Then, we can use \citet[Theorem C.3.(ii).(C.7)]{decooman} to get 
\begin{align*}
    \int_\mathcal{X} &c(x,T(x)) \underline{P}^\prime(\text{d}x)\\
    &= \underline{P}^\prime(\mathcal{X}) \cdot \underline{c} + \int_{\underline{c}}^{\overline{c}} \underline{P}^\prime\left( \{x \in \mathcal{X} : c(x,T(x)) > t\} \right) \text{d}t\\
    &= \underline{c} + (1-\epsilon) \int_{\underline{c}}^{\overline{c}} {P}\left( \{x \in \mathcal{X} : c(x,T(x)) > t\} \right) \text{d}t\\
    &= (1-\epsilon) \int_\mathcal{X} c(x,T(x)) {P}(\text{d}x).
\end{align*}

As we can see, the desired result becomes either harder to prove (if we only ask for $c$ to be Borel measurable),
or it needs an extra assumption (a bounded cost function $c$). A similar argument holds also for the Kantorovich's results.

Since the main goal of the paper is to transport lower probabilities {\em that completely characterize credal sets}, and since by Lemma \ref{equality-cores} we know that $\mathcal{M}(\underline{P})=\mathcal{M}(\underline{P}^\prime)$, we opted for using the incoherent lower probability $\underline{P}$ instead of the coherent one $\underline{P}^\prime$. 

We conclude with a remark. We acknowledge that working with the incoherent lower probability $\underline{P}$ makes it harder to use the techniques that we employ in this work, for models that are more complex than the $\epsilon$-contaminations that we study. How to overcome this shortcoming will be the object of future work.

\section{On The Difference Between Conditioning Methods}\label{condit-rem}
    Let us illustrate the difference that the choice of conditioning rule makes when working with imprecise probabilities. Suppose that, instead of considering geometric, we choose generalized Bayes' conditioning \citep[Section 6.4]{walley}. That is, for a generic credal set $\mathcal{P}$, for all $A\in\mathcal{F}$ and all $B\in\mathcal{G}$ such that $G(B)>0$, 
    $$\underline{G}^\text{GBC}(A \mid B) \coloneqq \inf_{P\in\mathcal{P}} \left[ \frac{G(A, B)}{G(B)} \right].$$
    By \citet[Theorem 6.4.6]{walley}, we have that 
    $$\inf_{P\in\mathcal{P}} \left[ \frac{G(A, B)}{G(B)} \right]=\frac{\underline{G}(A, B)}{\overline{G}(B)},$$
    so 
    \begin{align*}
        \underline{G}^\text{GBC}(A \mid B) &= \frac{\underline{G}(A, B)}{\overline{G}(B)}\\
        &\leq \frac{\underline{G}(A, B)}{\underline{G}(B)} \eqqcolon \underline{G}^\text{geom}(A \mid B),
    \end{align*}
    since $\overline{G}(B) \geq \underline{G}(B)$, for all $B \in\mathcal{G}$. This was also proven in \citet[Lemma 4.3]{gong}. This inequality still holds true even in a simple model like $\epsilon$-contaminated credal sets that we consider in the present work. To see this, notice that -- in the same notation as Lemma \ref{lemma-prop} -- by \citet[Example 3]{wasserman} in an $\epsilon$-contamination model $\mathcal{P}_\epsilon$ we have that $\overline{P}^\prime(A)= (1-\epsilon) P(A) + \epsilon$, for all $A \in\mathcal{F} \setminus \{\emptyset\}$, and $\overline{P}^\prime(\emptyset)=0$. Similarly to what we did in the main body of the paper, we can focus instead the incoherent upper probability $\overline{P}(A)= (1-\epsilon) P(A) + \epsilon$, for all $A \in\mathcal{F}$. Then, by \citet[Section 6.6.2]{walley}, we have that 
    $$\underline{G}^\text{GBC}(A \mid B) = \frac{(1-\epsilon){G}(A, B)}{(1-\epsilon) Q(B) + \epsilon}$$
    and, similarly, that
    $$\underline{G}^\text{GBC}(B \mid A) = \frac{(1-\epsilon){G}(A, B)}{(1-\epsilon) P(A) + \epsilon}.$$
    In turn, this implies that the elements of $\Gamma_R^\text{GBC}(\underline{P},\underline{Q})$ are such that $\underline{G}(A, B) = \underline{G}^\text{GBC}(A \mid B) \overline{Q}(B)=\underline{G}^\text{GBC}(B \mid A) \overline{P}(A)$, for all $A\in\mathcal{F}$ and all $B\in\mathcal{G}$ (having positive upper probability). They are different than the elements of  $\Gamma_R^\text{geom}(\underline{P},\underline{Q})$ that we introduced in Definition \ref{kanto_restr}.

\section{Proof of Lemma \ref{well-def}}\label{extra-proof}

We begin by noting that $T_\#\underline{P}$ is a real-valued set function, and that $T_\#\underline{P}(\emptyset)=\underline{P}(T^{-1}(\emptyset))=\underline{P}(\emptyset)=0$.\footnote{We always (implicitly) assume that $T^{-1}(\emptyset)=\emptyset$.} In turn, $T_\#\underline{P}$ is what \citet[Section 2.1]{marinacci2} call a \textit{game}. In addition, since $\underline{P}(T^{-1}(B)) \in [0,1]$, for all $B\in\mathcal{G}$, we have that the co-domain of $T_\#\underline{P}$ is $[0,1]$, and so $T_\#\underline{P}$ is what \citet[Section 2.1.2]{marinacci2} call a \textit{bounded game}. We can then consider the core of such a bounded game \citet[Section 2.2]{marinacci2}, which is a slight generalization of the core introduced earlier. It is defined as $\mathcal{M}^\text{bc}(T_\#\underline{P})\coloneqq \{\mu \in \text{bc}(\mathcal{G}) : \mu(B) \geq T_\#\underline{P}(B) \text{, } \forall B\in\mathcal{G} \text{, and } \mu(\mathcal{Y})=T_\#\underline{P}(\mathcal{Y})\}$,  where $\text{bc}(\mathcal{G})$ is the vector spaces of all bounded charges (signed, finitely additive measures) on $\mathcal{G}$. 

Notice that $\mathcal{M}^\text{bc}(T_\#\underline{P})$ is nonempty because it contains the pushforward measures of the elements of $\mathcal{M}^\text{fa}(\underline{P})$ in \eqref{core_first} through map $T$. In formulas, $P\in\mathcal{M}^\text{fa}(\underline{P}) \implies T_\# P \in \mathcal{M}^\text{bc}(T_\#\underline{P})$. To see that this is the case, notice that a finitely additive probability is a special case of a bounded charge,
and that $\mathcal{M}^\text{fa}(\underline{P})$ contains all the finitely additive probabilities on $\mathcal{X}$ that set-wise dominate $\underline{P}$. More formally, pick any $\tilde B \in \mathcal{G}$ and any $P\in\mathcal{M}^\text{fa}(\underline{P})$. Let $\tilde A=T^{-1}(\tilde B)$. Then, by \eqref{core_first}, $T_\#P(\tilde B)= P(T^{-1}(\tilde B)) = P(\tilde A) \geq \underline{P}(\tilde A) = \underline{P}(T^{-1}(\tilde B)) = T_\#\underline{P}(\tilde B)$. But then $T_\#P \in \mathcal{M}^\text{bc}(T_\#\underline{P})$, which shows that $\mathcal{M}^\text{bc}(T_\#\underline{P}) \neq \emptyset$. In turn, by \citet[Proposition 3]{marinacci2}, we have that $\mathcal{M}^\text{bc}(T_\#\underline{P})$ is weak$^\star$-compact. 

Now, notice that $\mathcal{M}^\text{fa}(T_\#\underline{P})\coloneqq \{Q \in \Delta^\text{fa}_\mathcal{Y} : Q(B) \geq T_\#\underline{P}(B) \text{, } \forall B\in\mathcal{G}\}$ is a proper subset of $\mathcal{M}^\text{bc}(T_\#\underline{P})$, since $\mathcal{M}^\text{fa}(T_\#\underline{P})$ only considers the \textit{finitely additive} probabilities (and not all the bounded charges) that set-wise dominate $T_\#\underline{P}$.

Let now $(Q_\alpha)_{\alpha\in \mathcal{I}}$ be a net in $\mathcal{M}^\text{fa}(T_\#\underline{P})$ that weak$^\star$-converges to $Q\in\Delta^\text{fa}_\mathcal{Y}$. Here $\mathcal{I}$ is a generic index set. This means that $Q_\alpha(B) \rightarrow Q(B)$, for all $B \in\mathcal{G}$. That is, pick any $B \in\mathcal{G}$; then,
\begin{align}\label{eq-interm}
    \forall \epsilon>0 \text{, } \exists \tilde{\alpha}_\epsilon : \forall Q_\alpha \succeq Q_{\tilde{\alpha}_\epsilon} \text{, } |Q_\alpha(B) - Q(B)| < \epsilon.
\end{align}

Equation \eqref{eq-interm} implies that, for all $Q_\alpha \succeq Q_{\tilde{\alpha}_\epsilon}$, we have that $Q(B) > Q_\alpha(B)-\epsilon \geq T_\#\underline{P}(B) - \epsilon$. Letting $\epsilon \rightarrow 0$, this implies that $Q(B) \geq T_\#\underline{P}(B)$. But $B$ was chosen arbitrarily in $\mathcal{G}$, and so $Q\in\mathcal{M}^\text{fa}(T_\#\underline{P})$.
Hence, $\mathcal{M}^\text{fa}(T_\#\underline{P})$ is weak$^\star$ sequentially closed, and therefore it is  weak$^\star$ closed. Being a weak$^\star$ closed subset of a weak$^\star$ compact space, we can conclude that $\mathcal{M}^\text{fa}(T_\#\underline{P})$ is weak$^\star$ compact itself.

By \citet[Section 2.1.(viii)]{cerreia}, we know that a (set) function $\nu:\mathcal{G} \rightarrow [0,1]$ is a lower probability if and only if there exists a weak$^\star$-compact set $\mathcal{M} \subseteq \Delta^\text{fa}_\mathcal{Y}$ such that $\nu(B)=\min_{Q \in \mathcal{M}}Q(B)$, for all $B \in \mathcal{G}$. Letting $\mathcal{M} \equiv \mathcal{M}^\text{fa}(T_\#\underline{P})$ and $\nu\equiv T_\#\underline{P}$, we obtain that $T_\#\underline{P} (B) \leq \min_{Q \in \mathcal{M}^\text{fa}(T_\#\underline{P})} Q(B)$, for all $B\in\mathcal{G}$. Suppose now for the sake of contradiction that there is $\tilde B \in\mathcal{G}$ such that $T_\#\underline{P} (\tilde B) < \min_{Q \in \mathcal{M}^\text{fa}(T_\#\underline{P})} Q(\tilde B)$. This would imply that $\mathcal{M}^\text{fa}(T_\#\underline{P}) \supsetneq \mathcal{M}^\text{fa}(T_\#\underline{P})$, a contradiction. Hence, we can conclude that $T_\#\underline{P} (B) = \min_{Q \in \mathcal{M}^\text{fa}(T_\#\underline{P})} Q(B)$, for all $B\in\mathcal{G}$, thus completing the proof. \hfill \qedsymbol

\bibliographystyle{plainnat}
\bibliography{tropical}

\end{document}